\def\eqref#1{equation~\ref{#1}}
\def\1{\bm{1}}
\DeclareMathAlphabet{\mathsfit}{\encodingdefault}{\sfdefault}{m}{sl}
\SetMathAlphabet{\mathsfit}{bold}{\encodingdefault}{\sfdefault}{bx}{n}
\newcommand{\E}{\mathbb{E}}
\newcommand{\R}{\mathbb{R}}
\DeclareMathOperator*{\argmin}{arg\,min}
\crefname{figure}{Figure}{Figures}
\Crefname{figure}{Figure}{Figures}
\crefname{table}{Table}{Tables}
\Crefname{table}{Table}{Tables}
\crefname{equation}{Equation}{Equations}
\Crefname{equation}{Equation}{Equations}
\crefname{section}{Section}{Sections}
\Crefname{section}{Section}{Sections}
\crefname{subsection}{Subsection}{Subsections}
\Crefname{subsection}{Subsection}{Subsections}
\newcommand{\norm}[1]{\lVert #1 \rVert}
\newcommand{\T}{\mathsf{T}}
\renewcommand{\ge}{\geqslant}
\DeclareMathOperator*{\Span}{span}
\newcommand{\ind}{\mathbf{1}}
\renewcommand{\Pr}{\mathbb{P}}
\newcommand{\calT}{\mathcal{T}}
\newcommand{\rN}{\mathbb{N}}
\newcommand{\rR}{\mathbb{R}}
\DeclareMathOperator*{\expect}{{\huge \mathbb{E}}}
\newcommand{\expects}{\expect\nolimits}
\newcommand{\indic}[1]{\ind\{#1\}}
\newcommand{\sspace}{\mathcal{X}}
\newcommand{\rew}{\mathcal{R}}
\newcommand{\prob}{\mathcal{P}}
\newcommand{\aspace}{\mathcal{A}}
\newcommand{\cbar}{\, | \,}
\title{Proto-Value Networks: Scaling Representation Learning with Auxiliary Tasks}
\author{
\hspace*{\fill}%
Jesse Farebrother \textsuperscript{* 1 3 5} 
,\, Joshua Greaves \textsuperscript{* 5}%
,\, Rishabh Agarwal \textsuperscript{2 3 5}%
,\, Charline Le Lan \textsuperscript{4}%
\hspace*{\fill}\\%
\hspace*{\fill}
\textbf{Ross Goroshin \textsuperscript{5}%
,\, Pablo Samuel Castro \textsuperscript{5}%
,\, Marc G. Bellemare \textsuperscript{\textdagger\, 3 5}}%
\hspace*{\fill}
}
\begin{document}

\let\thefootnote\relax\footnotetext{\,Correspondence to: Jesse Farebrother \texttt{<jfarebro@cs.mcgill.ca>}}
\let\thefootnote\relax\footnotetext{\hspace{-1.5mm}\textsuperscript{1} McGill University, \textsuperscript{2} Universit\'{e} de Montr\'{e}al, \textsuperscript{3} Mila -- Qu\'{e}bec AI Institute, \textsuperscript{4} University of Oxford}
\let\thefootnote\relax\footnotetext{\hspace{-1.5mm}\textsuperscript{5} Google Research -- Brain Team, \textsuperscript{*} Equal contribution, \textsuperscript{\textdagger} CIFAR Fellow.}

\maketitle

\begin{abstract}
Auxiliary tasks improve the representations learned by deep reinforcement learning agents. Analytically, their effect is reasonably well-understood; in practice, however, their primary use remains in support of a main learning objective, rather than as a method for learning representations. 
This is perhaps surprising given that many auxiliary tasks are defined procedurally, and hence can be treated as an essentially infinite source of information about the environment.
Based on this observation, we study the effectiveness of auxiliary tasks for learning rich representations, focusing on the setting where the number of tasks and the size of the agent's network are simultaneously increased.
For this purpose, we derive a new family of auxiliary tasks based on the successor measure. These tasks are easy to implement and have appealing theoretical properties.
Combined with a suitable off-policy learning rule, the result is a representation learning algorithm that can be understood as extending \citet{mahadevan07pvf}'s proto-value functions to deep reinforcement learning -- accordingly, we call the resulting object \emph{proto-value networks}.
Through a series of experiments on the Arcade Learning Environment, we demonstrate that proto-value networks produce rich features that may be used to obtain performance comparable to established algorithms, using only linear approximation and a small number (\textasciitilde4M) of interactions with the environment's reward function.
\end{abstract}

\section{Introduction}
In deep reinforcement learning~(RL), an agent maps observations to a policy or return prediction by means of a neural network.
The role of this network is to transform observations into a series of successively refined features, which are linearly combined by the final layer into the desired prediction.
A common perspective treats this transformation and the intermediate features it produces as the agent's \emph{representation} of its current state.
Under this lens, the learning agent performs two tasks simultaneously: representation learning, the discovery of useful state features; and credit assignment, the mapping from these features to accurate predictions.

Although end-to-end RL has been shown to obtain good performance in a wide variety of problems~\citep{mnih15human, levine2016end, bellemare2020autonomous}, modern RL methods typically incorporate additional machinery that incentivizes the learning of good state representations: for example,
predicting immediate rewards \citep{jaderberg17unreal},
future states~\citep{schwarzer21spr}, or observations~\citep{gelada19mdp};
encoding a similarity metric \citep{castro20scalable,agarwal2021contrastive, zhang20learning}; and
data augmentation \citep{laskin2020reinforcement}.
In fact, it is often possible, and desirable, to first learn a sufficiently rich representation with which credit assignment can then be efficiently performed; in that sense, representation learning has been a core aspect of RL from its early days \citep{sutton93online,sutton96generalization,ratitch04sparse,mahadevan07pvf,diuk2008object, konidaris11value,sutton11horde}.%

An effective method for learning state representations is to have the network predict a collection of auxiliary tasks associated with each state \citep{caruana97multitask,jaderberg17unreal,chung19twotimescale}.
In an idealized setting, auxiliary tasks can be shown to induce a set of features that correspond to the principal components of what is called the auxiliary task matrix \citep{bellemare2019avf,lyle21auxtask,lelan2022novel}. 
This makes it possible to analyze the theoretical approximation error \citep{petrik07analysis,parr08analysis}, generalization \citep{le2022generalization}, and stability \citep{ghosh2020representations} of the learned representation.
Perhaps surprisingly, there is comparatively little that is known about their empirical behaviour on larger-scale environments.
In particular, the scaling properties of representation learning from auxiliary tasks -- i.e., the effect of using more tasks, or increasing network capacity -- remain poorly understood. This paper aims to fill this knowledge gap.

Our approach is to construct a family of auxiliary rewards that can be sampled and subsequently.
Specifically, we implement the successor measure \citep{blier2021learning, touati21fwdbwd}, which extends the successor representation \citep{dayan93sr} by replacing state-equality with set-inclusion.
In our case, these sets are defined implicitly by a family of binary functions over states.
We conduct most of our studies on binary functions derived from randomly-initialized networks, whose effectiveness as random cumulants has already been demonstrated~\citep{dabney21vip}.

Although our results may hold for other types of auxiliary rewards, our method has a number of benefits: it can be trivially scaled by sampling more random networks to serve as auxiliary tasks, it directly relates to the binary reward functions common of deep RL benchmarks, and can to some extent be theoretically understood.
The actual auxiliary tasks consist in predicting the expected return of the random policy for their corresponding auxiliary rewards; in the tabular setting, this corresponds to proto-value functions \citep{mahadevan07pvf,stachenfeld14design,machado18eigopts}. Consequently, we call our method \emph{proto-value networks} (PVN).

We study the effectiveness of this method on the Arcade Learning Environment~(ALE) \citep{bellemare13arcade}. 
Overall, we find that PVN produces state features that are rich enough to support linear value approximations that are comparable to those of DQN \citep{mnih15human} on a number of games, while only requiring a fraction of interactions with the environment reward function.
We explore the features learned by PVN and show that they capture the temporal structure of the environment, which we hypothesize contributes to their utility when used with linear function approximation.

In an ablation study, we find that increasing the value network's capacity improves the performance of our linear agents substantially, and that larger networks can accommodate more tasks.
Perhaps surprisingly, we also find that our method performs best with what might seem like small number of auxiliary tasks: the smallest networks we study produce their best representations from 10 or fewer tasks, and the largest, from 50 to 100 tasks. %
In a sense, this finding corroborates the result of \citet[Fig. 5]{lyle21auxtask}, where optimal performance (on a small set of Atari 2600 games and with the standard DQN network) was obtained with a single auxiliary task.
From this finding we hypothesize that individual tasks may produce much richer representations than expected, and the effect of any particular task on fixed-size networks (rather than the idealized, infinite-capacity setting studied in the literature) remains incompletely understood.

\vspace{-0.2cm}
\section{Related work}
\vspace{-0.2cm}
Deep RL algorithms have employed auxiliary prediction tasks to learn representations with various emergent properties \citep{schaul15uvfa, jaderberg17unreal, machado18eigopts, bellemare2019avf, gelada19mdp, fedus19hyperbolic, dabney21vip, lyle22infer}.  While most of these papers optimize auxiliary tasks in support of reward maximization from online interactions, our work investigates learning representations solely from auxiliary tasks on offline datasets. Closely related to our work is the study of random cumulants~\citep{dabney21vip, lyle21auxtask}, both of which identify random cumulant auxiliary tasks as being especially useful in sparse-reward environments.
Our work differs from these prior works in both motivation and implementation.
Notably absent in prior work on random cumulants is the study of representational capacity as a function of the number of tasks.

Another body of related work on decoupling representation learning from RL primarily revolves around the use of contrastive learning~\citep{anand19atari, wu2019laplacian, stooke21atc, schwarzer21sgi, erraqabi22contlaplace}.
\citet{anand19atari} proposed ST-DIM, a collection of temporal contrastive losses operating on image patches from environmental observations. Although the representations learned by ST-DIM are able to predict annotated state-variables in Atari 2600 games, their pretraining method was never evaluated for control.
\citet{stooke21atc} uses contrastive learning  for learning the temporal dynamics, resulting in minor improvements in online control from a fixed representation. Additionally, \citet{schwarzer21sgi} augments next-state prediction with goal-conditioned RL and inverse dynamics modelling, enabling strong performance on Atari 100k benchmark~\citep{kaiser20model}. Our work is complementary to these prior works and investigates the utility of scaling auxiliary tasks for learning good representations, which in principle can be easily combined with existing approaches. Additionally, recent work on using state-similarity metrics tackles the representation learning problem through the lens of behavioral similarity \citep{castro21mico,zhang20learning, agarwal2021contrastive}. We note that, in contrast to our method, the behavioral metrics used in these works are heavily based on the reward structure of the environment.

Related to our method, \citet{touati21fwdbwd} consider representation learning with the successor measure (see also \citeauthor{touati21phdthesis}, \citeyear{touati21phdthesis}, Algorithm 7). Algorithmically, their approach differs from ours in a number of ways, including the use of a learned state density function in lieu of indicator functions, the decomposition of the successor measure into its so-called forward and backward representations, and a bespoke sampling procedure to generate sample trajectories from which the representation is learned. By comparison, our approach directly constructs a relevant set of auxiliary tasks, which results in a significantly simpler algorithm that is more easily scaled according to available computational resources and to the full gamut of Atari 2600 games, as we will demonstrate.

Recently, there have also been efforts to cast the representation learning problem in RL as a min-max objective where you learn state features that can linearly represent a specific class of value-functions \citep{bellemare2019avf} or the Bellman backup itself \citep{modi2021model, zhang2022efficient}. 
Although we do not frame our method in terms of a min-max formulation, we do seek to learn a representation that can linearly predict the value function of the random policy for any given reward function.
These previous works are primarily theoretical in nature and often require specific assumptions about the underlying MDP.
In contrast, our class of auxiliary prediction tasks allows us to learn representations in environments with large, high-dimensional state-spaces, without any prior assumptions.

\vspace{-0.2cm}
\section{Background}
\vspace{-0.2cm}
The RL problem can be modeled as a Markov Decision Process (MDP) defined by the $5$-tuple $\mathcal{M} = \langle \mathcal{X}, \mathcal{A}, \mathcal{R}, \mathcal{P}, \gamma \rangle$, in which $\mathcal{X}$ is a set of states, $\mathcal{A}$ is a set of actions, $\mathcal{R}: \mathcal{X} \times \mathcal{A} \mapsto \mathbb{R}$ is a scalar reward function, $\mathcal{P}: \mathcal{X} \times \mathcal{A} \mapsto \mathscr{P} (\mathcal{X})$ is a transition function that maps state-action pairs to a distribution over next states, and $\gamma \in [0, 1)$ is a discount factor. 
A policy $\pi: \mathcal{X} \mapsto \mathscr{P} (\mathcal{A})$ is a function that maps states to a distribution over actions.

The goal of an RL agent is to learn a policy that maximizes the cumulative discounted rewards from the environment, also known as the discounted return.
The state-action value function is defined as the expected discounted return when starting in a state and following the policy $\pi$:
\begin{equation*}
	Q^\pi(x, a) := \expect_{A_t \sim \pi, \prob} \left[ \sum_{t = 0}^\infty \gamma^t \rew(X_t, A_t) \cbar X_0 = x, A_0 = a
	\right] .
\end{equation*}
In this paper, we consider approximating the value function $Q^\pi$ using a linear combination of features. We call the map $\phi: \sspace \rightarrow \mathbb{R}^k$ a \emph{$k$-dimensional state representation}; $\phi(x)$ is the feature vector for a state $x \in \sspace$. The value function approximant at $(x, a)$ is
\begin{equation*}
    \hat{Q} (x, a) = \phi(x)^\top w_a,
\end{equation*}
where $w_a \in \rR^k$ is a weight vector associated with action $a$.
In deep RL, the state representation is parameterized by a neural network. Often, the representation is learned end-to-end by optimizing the parameters to make more accurate predictions about the value function. Additional predictions that further shape the state representation are called \textit{auxiliary tasks} \citep{jaderberg17unreal}. In this work, we write $\calT$ for the set of auxiliary tasks.

The \emph{successor representation} \citep[SR; ][]{dayan93sr} encodes the temporal structure of the MDP in terms of which states can be reached from any other state under a given policy. It is given by
\begin{equation*}
    \psi_{\textsc{sr}}^{\pi}(x, a, \tilde x) = \sum_{t=0}^\infty \gamma^t \Pr \{ X_t = \tilde x \cbar X_0 = x, A_0 = a, A_{t>0 \sim \pi} \} .
\end{equation*}
A convenient, recursive form expresses the SR in terms of an indicator function, highlighting that for each $\tilde x$, the SR is the value function associated with the reward function $\mathcal{R}(x,a) = \indic{x = \tilde x}$:
\begin{equation*}
    \psi_{\textsc{sr}}^{\pi}(x, a, \tilde x) = \indic{x = \tilde x} + \gamma \expects_\pi \big [ \psi_{\textsc{sr}}(X', A', \tilde x) \cbar X = x, A = a \big ] .
\end{equation*}

\section{Proto-value networks}
\label{sec:methodology}
In this section, we derive our \emph{proto-value networks} algorithm. At a high level, this algorithm learns a state representation that approximates the singular vectors associated with the successor measure, the extension of the SR to continuous state spaces.
We do this in order to derive an algorithm that is more suitably tailored to the large state spaces of deep RL domains, where many states are encountered once or never at all, and some notion of distance between states must be accounted for.

To gain some understanding into this process, let us consider how the method of auxiliary tasks \citep{jaderberg17unreal} can be used to obtain a state representation that approximates the SR. In the tabular setting, where $\sspace$ and $\calT$ are of
finite sizes $n$ and $m$ respectively, we write the feature matrix $\Phi \in \rR^{n \times d}$, so that each state $x$ is associated with a feature vector $\phi(x) \in \R^d$. Given an auxiliary task matrix $\Psi \in \rR^{n \times m}$, the method of auxiliary tasks can be shown to be equivalent to minimizing the loss function
\begin{equation*}
    \mathcal{L}(\Phi, W) = \| \Phi W - \Psi \|_{F}^2 = \sum_{x \in \sspace, i \in \calT} \big ( \phi(x)^\top w_i - \psi_i(x) \big )^2
\end{equation*}
jointly with respect to $\Phi$ and $W$. Here, $W \in \rR^{d \times m}$ is a weight matrix with columns $(w_i)_{i=1}^m$ and $\psi_i(x)$ is the entry of $\Psi$ corresponding to state $x$ and task $i$. In the sequel, we will assume that a near-optimal $W$ can be obtained easily and simply consider the loss
\begin{equation*}
    \mathcal{L}(\Phi) = \min_{W} \mathcal{L}(\Phi, W),
\end{equation*}
to be minimized over $\Phi$. It is known \citep[e.g.,][]{bellemare2019avf} that any feature matrix that minimizes this loss function must have columns that lie in the subspace spanned by the top $d$ left singular vectors of $\Psi$.
In particular, when $\Psi$ is square and symmetric the auxiliary task method recovers the subspace spanned by its top $d$ eigenvectors.

Here, we are interested in the setting in which $\Psi^{\pi_r}$ is the SR matrix for the \emph{uniformly random policy}. In the symmetric case, the eigenvectors of $\Psi^{\pi_r}$ form what is called the \emph{proto-value functions} of the MDP \citep{mahadevan07pvf}.\footnote{\citet{behzadian18feature} gives the singular-vector extension for the asymmetric case. Because this extension is straightforward and symmetry rarely holds, in this paper we use the term proto-value networks to describe state representations learned in both the symmetric and asymmetric settings.}
These eigenvectors are of special importance because they encode the spatial structure of the MDP in terms of a diffusion process, and have been shown to correlate with neural encodings of spatial location in mammals \citep{stachenfeld14design}.

\subsection{Extension to the random successor measure}

Let $\pi$ be a policy and $\Sigma$ the power set of $\sspace$. The successor measure $\psi^\pi : \sspace \times \aspace \times \Sigma \to \rR$ extends the SR to quantify the discounted visitation frequency of an agent, in expectation over trajectories and for \emph{various subsets of the state space} \citep{blier2021learning}. Given a set $S \subset \sspace$, we write
\begin{equation*}
    \psi^\pi(x, a, S) = \sum_{t=0}^\infty \gamma^t \Pr \{ X_t \in S \cbar X_0 = x, A_0 = a, A_{t > 0} \sim \pi \} \, .
\end{equation*}
As with the SR, this can be expressed in terms of an expectation over an indicator function, and further decomposed in a Bellman equation:
\begin{align*}
    \psi^\pi(x, a, S) &= \sum_{t=0}^\infty \expects_\pi \big [ \gamma^t \indic{X_t \in S} \cbar X_0 = x, A_0 = a, A_{t > 0} \sim \pi \big ] \\
    &= \indic{x \in S} + \gamma \expects_\pi \big [ \psi^\pi (X', A', S) \cbar X = x, A = a \big ] \, .
\end{align*}%

The passage from state equality to set inclusion is particularly appealing in deep RL: first, because states rarely repeat along a trajectory or between episodes, the indicator $\indic{x = y}$ is almost always zero. Second, set inclusion allows us to incorporate a notion of closeness to $\psi^{\pi}$,
e.g. by focusing on subsets $S$ that include semantically similar states.
We will return to this point later in the section.

By analogy with the tabular setting, let us now define a loss function which, if suitably minimized, should produce a useful state representation. For ease of exposition, we continue to assume that $\sspace$ is finite, although perhaps very large; the reader interested in a proper mathematical treatment of the full continuous-state setting is invited to consult \citet{blier2021learning} and \citet{pfau19spectral}.

Let $\xi$ be a distribution over subsets of states and $\Xi \in \R^{n \times n}$ is a diagonal matrix with entries $\{ \xi(x) : x \in \mathcal{X} \}$ on the diagonal. The \emph{Monte Carlo successor measure loss} is
\begin{equation*}
    \mathcal{L}_{MCSM}(\Phi) = \min_{w_{S,a} \in \rR^d} \expect_{S \sim \xi} \Big [ \big (\sum_{x \in \sspace, a \in \aspace} (\phi(x)^\top w_{S,a} - \psi^{\pi}(x, a, S) \big)^2 \Big ] .
\end{equation*}
\begin{restatable}{theorem}{optimalrepresentation}
If $\Phi^*$ is a feature matrix minimizing $\mathcal{L}_{MCSM}(\Phi)$, then its column space spans the top $d$ left singular vectors of the (infinite-dimensional) successor measure matrix $\Psi^{\pi}$ with respect to the inner product $(x, y)_\Xi = y^\top \Xi x$, for all $x, y \in \R^n$.
\end{restatable}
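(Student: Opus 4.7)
The plan is to reduce $\mathcal{L}_{MCSM}$ to a weighted low-rank approximation problem and then appeal to a weighted Eckart--Young--Mirsky argument, extending the tabular proof of \citet{bellemare2019avf} to the (possibly enormous) index set of subsets. First I would unfold the expectation and reinterpret $\Psi^\pi$ as a matrix with rows indexed by $x \in \sspace$ and columns indexed by $(S, a) \in \Sigma \times \aspace$, with entry $\psi^\pi(x, a, S)$; collecting the weight vectors $w_{S,a}$ as the columns of a matrix $W$, the loss takes the doubly weighted Frobenius form
\[
\mathcal{L}_{MCSM}(\Phi) \;=\; \min_{W} \big\| \Phi W - \Psi^\pi \big\|_{\Xi,\, D_\xi}^{2},
\]
where the row-weights come from $\Xi$ (the matrix that defines the inner product in the theorem statement) and the column-weights place mass $\xi(S)$ on each subset and uniform mass on each action.

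Second, I would perform the inner minimization over $W$ column by column: for each $(S, a)$ the optimizer $w_{S,a}^*$ is the $\Xi$-orthogonal projection of the target column $\psi^\pi(\cdot, a, S)$ onto the column space of $\Phi$. Substituting back, $\mathcal{L}_{MCSM}(\Phi)$ becomes the $\xi$-averaged squared distance from the columns of $\Psi^\pi$ to the subspace $V := \Span(\Phi) \subset \rR^n$, measured in $(\cdot,\cdot)_\Xi$. After the change of variables $\tilde\Phi = \Xi^{1/2}\Phi$ this is a standard best-rank-$d$ approximation problem to $\Xi^{1/2}\Psi^\pi$ in the ordinary Euclidean norm, so the Eckart--Young--Mirsky theorem pins down the optimal subspace as the span of the top $d$ left singular vectors of $\Xi^{1/2}\Psi^\pi$. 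Undoing the change of variables then identifies this with the span of the top $d$ left singular vectors of $\Psi^\pi$ with respect to $(\cdot,\cdot)_\Xi$, which is the desired conclusion.

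The main obstacle I anticipate is the potentially unwieldy column index: when $\Sigma$ is all of $2^{\sspace}$, one cannot literally SVD $\Psi^\pi$ by enumerating columns. The workaround is that the reduced objective depends on $\Psi^\pi$ only through the finite-dimensional Gram operator $G \;=\; \expect_{S \sim \xi}\sum_a \psi^\pi(\cdot,a,S)\,\psi^\pi(\cdot,a,S)^\top \;\in\; \rR^{n\times n}$, which is well-defined whenever the expectation converges. The top $d$ eigenvectors of $\Xi^{1/2} G\, \Xi^{1/2}$ coincide with the required left singular vectors, so the Eckart--Young step can equivalently be carried out via a Courant--Fischer argument on this $n \times n$ matrix. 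The remaining bookkeeping, namely handling eigenvalue ties (so only the column space, not $\Phi^*$ itself, is determined) and verifying that $\Xi$ is invertible on its support, is routine.
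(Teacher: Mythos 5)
Your proposal follows essentially the same route as the paper's proof: rewrite the loss as a weighted low-rank approximation problem, eliminate $W$ via orthogonal projection onto $\Span(\Phi)$, and conclude by an Eckart--Young--Mirsky argument for the weighted SVD; your extra devices (the $\Xi^{1/2}$ change of variables, the finite $n\times n$ Gram operator to tame the infinite column index, and the remark that only the column space is pinned down under eigenvalue ties) are details the paper's four-line proof leaves implicit. The one point to flag is the placement of the weights: the loss as written sums uniformly over states and weights columns by $\xi(S)$, and the paper's proof correspondingly normalizes the right singular vectors via $B^\top \Xi B = I$, whereas you attach $\Xi$ to the rows --- though since the paper itself defines $\Xi$ as an $n\times n$ state-indexed matrix while the theorem's inner product is stated on $\R^n$, this ambiguity originates in the source rather than in your argument.
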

In practice, samples of $\psi^{\pi}(x, a, S)$ (which must be estimated from complete trajectories) are not available; instead, it is preferable to learn an approximation by bootstrapping \citep{sutton2018reinforcement}. The corresponding temporal-difference successor measure loss is
\begin{equation}\label{eqn:tdsm_loss}
    \min_{w_{S,a} \in \rR^d} \expect_{S \sim \xi} \Big [ \big (\sum_{x \in \sspace, a \in \aspace} (\indic{x \in S} + \gamma \expect_{\pi} \big [ \phi(X')^\top w_{S,A'} \cbar X = x, A = a \big ] - \phi(x)^\top w_{S,a} \big)^2 \Big ];
\end{equation}
we will use this form in the derivations that follow.

\subsection{A practical implementation} \label{practical-implementation}

Our algorithm aims to approximate the loss in Equation \ref{eqn:tdsm_loss} using tools from deep RL. We first approximate the expectation over $\xi$ by sampling a collection of sets $(S_i)_{i=1}^m$ from $\xi$. These sets are kept fixed throughout learning. With this in mind, each set corresponds to an indicator function that we treat as a binary reward function $r_i(x) = \indic{x \in S_i}$. The actual auxiliary task is then the value function of the random policy associated with this reward.

Denote by $\hat \psi_i(x, a)$ the prediction made by our neural network for state $x$, action $a$, and the set $S_i$. Given a sample transition $(x, a, x')$, we define the sample target
\begin{equation*}
    r_i(x) + \gamma \frac{1}{|\aspace|} \sum_{a' \in \aspace} \hat \psi_i(x', a') \, .
\end{equation*}
Notice that the average over the next-action $a'$ arises as a consequence of taking the policy $\pi$ to be uniformly random.
We then train the neural network by performing stochastic gradient descent on the loss derived from this sample target:
\begin{equation*}
    \big ( r_i(x) + \gamma \frac{1}{|\aspace|} \sum_{a' \in \aspace} \hat \psi_i(x', a') - \hat \psi_i(x, a) \big)^2 \, .
\end{equation*}
Following common usage, the actual gradient estimate is obtained by aggregating multiple transitions into a minibatch and applying the Adam optimizer \citep{kingma15adam}.

Before explaining how the sets $S_i$ are defined, let us remark on a number of appealing properties of these auxiliary tasks, when viewed from a deep RL perspective.
First, the use of a random policy means that learning usually proceeds in an off-policy manner. However, we expect this to be a relatively mild form of off-policy learning, one that is in general much more stable than one derived by maximization, as in a Bellman optimality equation.
Although one could also learn the value function associated with the current policy (as in SARSA \citep{rummery94online}), this precludes the use of offline datasets for learning the representation, or at least makes the learned representation strongly dependent on the behaviour policy. By contrast, the representation learned by PVN only depends on the availability of data. In effect, these auxiliary tasks \emph{depend only on the structure of the environment, and not on the agent's behaviour}.

We also expect binary reward functions to be easier to tune than, say, those derived from a distance function (dependent on getting the scale parameter correct) or real-valued random rewards (dependent on the underlying distribution). Binary rewards are particularly appealing in domains where the reward function is itself binary or ternary (i.e., Atari 2600 video games), in which case they can be adjusted to have similar statistics to the true reward function. We will demonstrate how to do this in the following section.

\begin{figure}[t]
\centering
\begin{subfigure}{.475\textwidth}
  \centering
  \includegraphics[width=\textwidth]{figures/pvn-one-hot-compressed.pdf}
  \caption{One-Hot Encoding.}
  \label{fig:one-hot}
\end{subfigure}%
\hspace*{\fill}%
\begin{subfigure}{.475\textwidth}
  \centering
  \includegraphics[width=\textwidth]{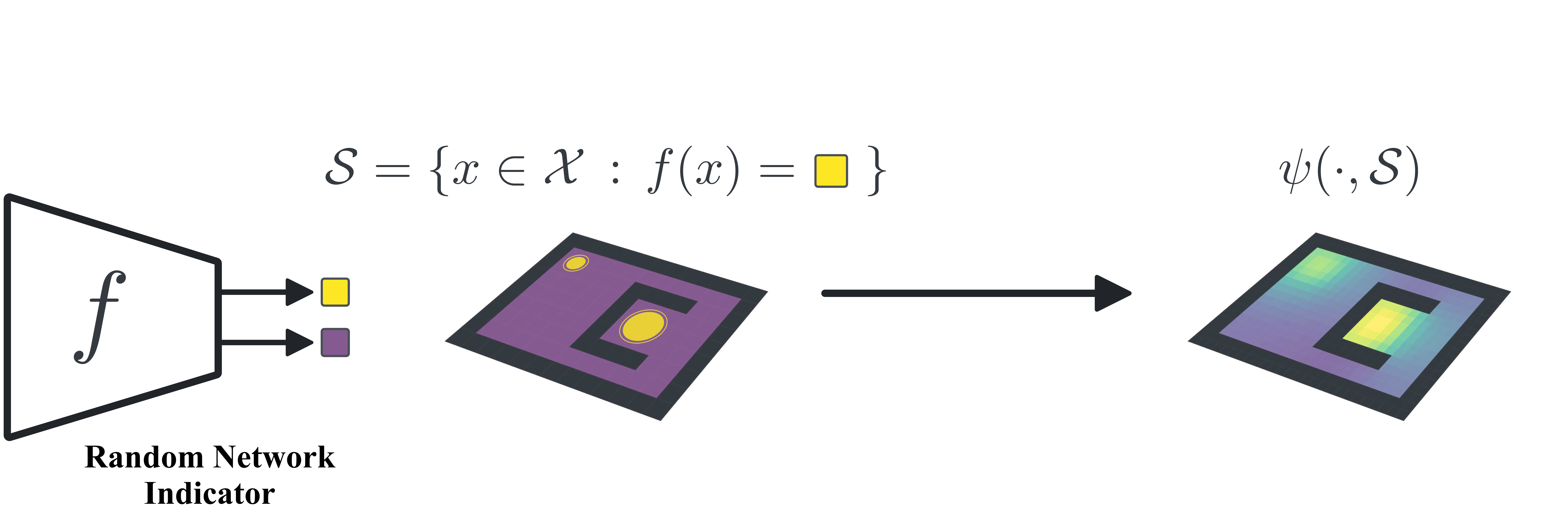}
    \caption{Random Network Indicator.}
  \label{fig:rni}
\end{subfigure}
\caption{
  Contrasting between (a)~state-equality indicator with a one-hot encoding over $\mathcal{X}$; and
  (b)~set-based~indicator~defined with a random network parameterizing 
the set $\mathcal{S} \subset \mathcal{X}$.
Each panel depicts the reward function obtained from the corresponding indicator and its
associated value function.
}
\label{fig:pvn}
\end{figure}

\subsection{Generating indicator functions}

Thus far we have described our algorithm as sampling sets of states $(S_i)_{i=1}^m$ which are then converted into a reward function by means of an indicator. In deep RL, this is inconvenient for two reasons: first, because it is not clear from what distribution of states should be sampled (how should one generate arbitrary video-game states?); second, because testing for set inclusion may also be brittle, effectively reducing to repeated equality tests.
Instead, we opt here for an \emph{implied set}, defined directly by its indicator function.

Let $\mathcal{F}$ be a family of functions mapping $\sspace$ to $\{0, 1\}$. Then, for any function $f \in \mathcal{F}$, its implied set is
\begin{equation*}
    S_f = \{ x \in \sspace : f(x) = 1 \}  \, .
\end{equation*}
Of course, this is equivalent to
\begin{equation*}
    f(x) = \indic{x \in S_f} .
\end{equation*}
Sampling functions from $\mathcal{F}$ according to some distribution $\xi_f$ and using them in lieu of the indicator is therefore equivalent to sampling sets of states for some distribution $\xi$ implied by $\xi_f$.
The advantage is that testing for inclusion in $S_f$ only requires the evaluation of $f$ at $x$, which for carefully-chosen functions can be done at little computational cost. 

The simplest scenario occurs when the family $\mathcal{F}$ is parametrized by some weight vector $\theta$, so that the random function $f_\theta$ corresponds to a random set of states.
In this paper we consider two such families of functions: universal hash functions and random network indicators.
Both families are \emph{tunable}, in the sense that they are parametrized so that the implied sets $S_f$ each cover a desired fraction of the overall state space. In probabilistic terms, tunable means that we can with minimal or no computation find parameters such that for any given state $x$,
\begin{equation*}
    \Pr \{ x \in S_f \} = p \, .
\end{equation*}
Here, the probabilistic statement is with respect to the draw of $f$ from $\mathcal{F}$. For universal hash functions, the tuning is immediate from the algorithm, and so we describe it first.

A \emph{Carter-Wegman} family of hash functions $\mathcal{F}_{\textsc{cw}}$ \citep{carter79universal} consists of functions mapping each integer $x \in \rN$ to the set $\{0, \dots, k - 1\}$, with the property that
\begin{equation*}
    \Pr \{ h(x) = i \} = \tfrac{1}{k} \text{ for } i = 0, \dots, k - 1,
\end{equation*}
where the probabilistic statement is over the random draw of $h$ from $\mathcal{F}_{\textsc{cw}}$. One may think of a CW family as deterministically assigning labels to integers $x$ (in the sense that $f$ is deterministic), but randomly (in the sense that $f$ is random). See Appendix \ref{app:hash} for full implementation details. 

We construct our tunable indicator function as
\begin{equation*}
    f(x) = \indic { h(x) = 0 } \, .
\end{equation*}
By construction, choosing $k = \tfrac{1}{p}$ yields the desired tuning (up to integer rounding). In our setting, $x$ is a high-dimensional observation (for example, an image) rather than an integer; yet we will see that, perhaps surprisingly, encoding each image as a unique integer is sufficient to produce better-than-random state representations.

One drawback of using universal hash functions to define sets of interest is that they may assign different values to perceptually near-identical states (a single pixel difference suffices). Following common usage \citep{burda19exploration,dabney21vip}, we may use randomly initialized neural networks to map similar states to similar values. Specifically, let us view a randomly initialized, single-output DQN network as a function $g : \sspace \to \rR$. We further decompose this function into a map $g_1 : \sspace \to \rR^l$ and a linear map from $\rR^l \to \rR$:
\begin{equation*}
    g(x) = g_1(x)^\top \omega + b,
\end{equation*}
where $\omega$ is a parameter vector and $b \in \rR$ is a bias term. With this in mind, we may simply construct the indicator function\footnote{This corresponds to using a \textsc{sign} nonlinearity at the end of the network.}
\begin{equation*}
    f(x) = \indic{g(x) \ge 0} .
\end{equation*}
The result, however, is not yet tunable: it is hard to choose the right distribution of network weights so that a desired fraction of states satisfy $f(x) = 1$. However, for any $p \in [0, 1]$ and any non-zero fixed $\omega$, $g_1$, and distribution of states $\mu$, there exists a bias term $b'$ such that
\begin{equation*}
    \Pr_{x \sim \mu} \{ g_1(x)^\top \omega + b' \ge 0 \} = p \, .
\end{equation*}
Such a bias term can accurately be determined from a small number of online interactions using the method of quantile regression \citep{koencker05quantile}; the exact update rule is given in Appendix \ref{appendix:qr}. With this method, we obtain network-derived indicator functions that are tunable and are likely to assign similar values to perceptually similar states. We refer to this class of indicator functions as \emph{random network indicators} (RNIs; Figure~\ref{fig:pvn}), which we empirically evaluate in the following~section. %

\section{Empirical Analysis}
To disentangle the contributions of the primary and auxiliary tasks on the expressiveness of the learned features, we split our learning procedure in two parts: a representation pre-training phase, and an online RL phase.
During the representation pre-training phase, we use transition data from offline Atari datasets in RL Unplugged~\citep{agarwal2020optimistic, gulcehre2020rl} and the procedure described in \cref{sec:methodology} to train an encoder which acts as a feature extractor (see Appendix~\ref{app:implementation} for complete implementation details).
Note that while this dataset contains environment rewards, none of the methods make use of the environment rewards unless explicitly stated.
Following the pre-training phase, we fix the weights of the learned encoder and train an RL agent online directly from this ``frozen'' representation.
Notably, we train for \emph{only 3.75 million} agent steps, compared to the 50 million agent steps (200M Atari 2600 frames) that is standard in most Atari setups.
Our agents are implemented using the Acme library \citep{hoffman2020acme}.
Our hyperparameter choices for both phases of training can be found in Appendix \ref{appendix:hyperparameters}.

\vspace{-0.2cm}
\subsection{Scaling capacity with auxiliary tasks}
\vspace{-0.1cm}

Prior work indicates that the optimal number of auxiliary tasks for representation learning is unexpectedly small, and that scaling up the number of auxiliary tasks can hurt performance \citep[Fig. 5]{lyle21auxtask}.
We expect that the representational capacity of the neural network has a strong effect on the number of auxiliary tasks we are able to learn with.
To study this effect, we use the Impala-CNN network~\citep{espeholt2018impala} and vary its effective width; that is, we multiply the number of convolutional filters and the number of features in the penultimate layer. We select a width multiplier in the set $\{ 1, 2, 4, 8 \}$ and sweep the number of tasks from $\{ 0, 10, \dots, 100 \}$. 
For this experiment, we use 5 games (\textsc{Asterix}, \textsc{Beam Rider}, \textsc{Pong}, \textsc{Qbert}, and \textsc{Space Invaders}) with 3 seeds for pre-training, resulting in 15 encoders for each combination of width multiplier and number of auxiliary tasks.
During the online phase, we train with 3 seeds per encoder, resulting in a total of 45 runs per sweep configuration.
We evaluate for 100 episodes after 1M agent steps.

We summarize our results using Rliable \citep{agarwal2021deep}. 
\cref{fig:task-similarity&scaling} depicts the optimality gap (distance from human-level performance).
We find that increasing the representational capacity of the network increases performance, even for a very small number of tasks.
This is perhaps surprising, since it indicates that we only need a handful of tasks to train large-scale representations, corroborating results by \citet{lyle21auxtask}.
Though a small part of this performance gain might be obtained just by virtue of having more output features (following the lottery ticket hypothesis \citep{frankle19lottery}), we can see that there is a marked improvement when we increase the number of auxiliary tasks from $0$ for all network sizes.%

We further find that as network capacity is increased, the algorithm can use more auxiliary tasks to improve its representation. For example, while the $2 \times$ network achieves maximal performance with 10 tasks, the $8 \times$ network performs best in the range of $[50, 100]$ tasks. This gives evidence for the scalability of PVN as an approach for learning rich state representations.

\begin{figure}[t]
\centering
\begin{minipage}{.425\textwidth}
  \centering
  \begin{tabular}{cc}
      \includegraphics[width=.4\textwidth]{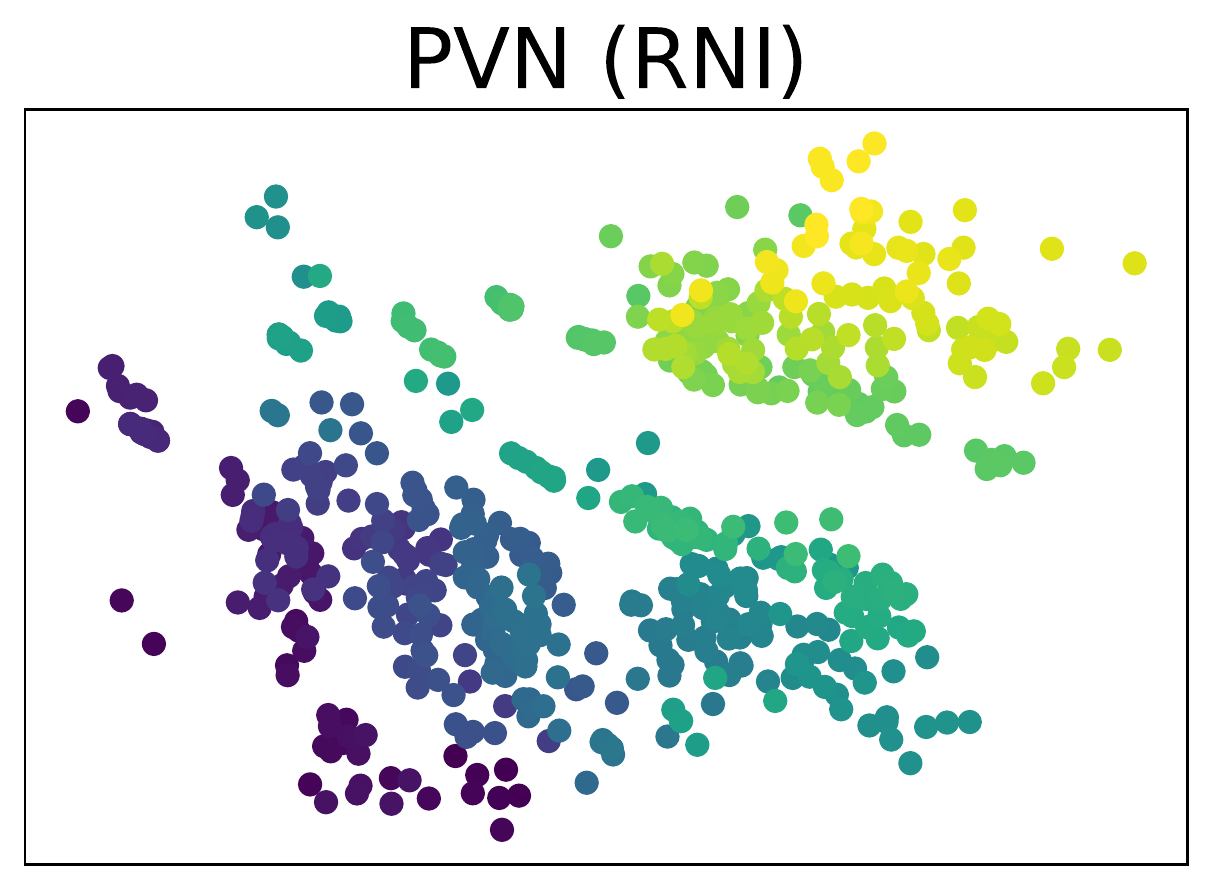} &%
      \includegraphics[width=.4\textwidth]{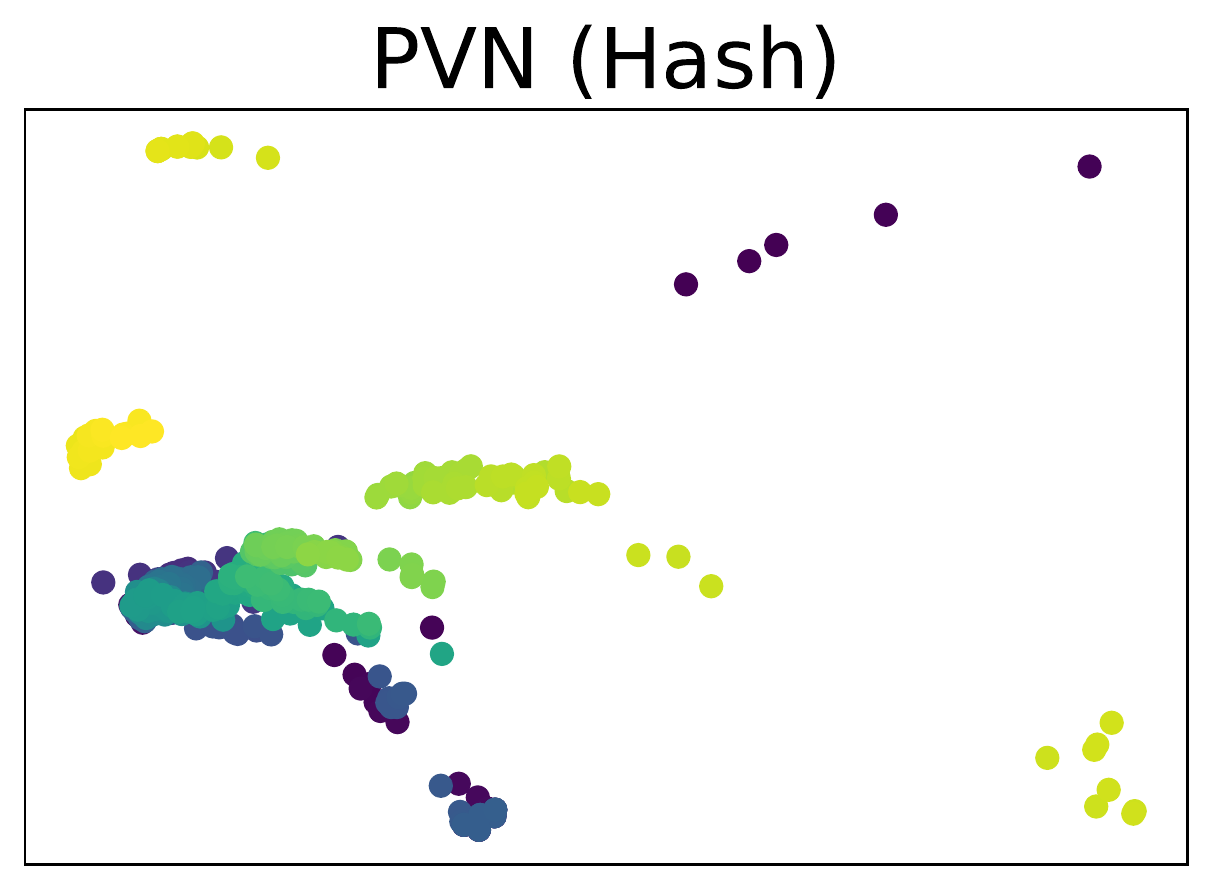} \\%
      \includegraphics[width=.4\textwidth]{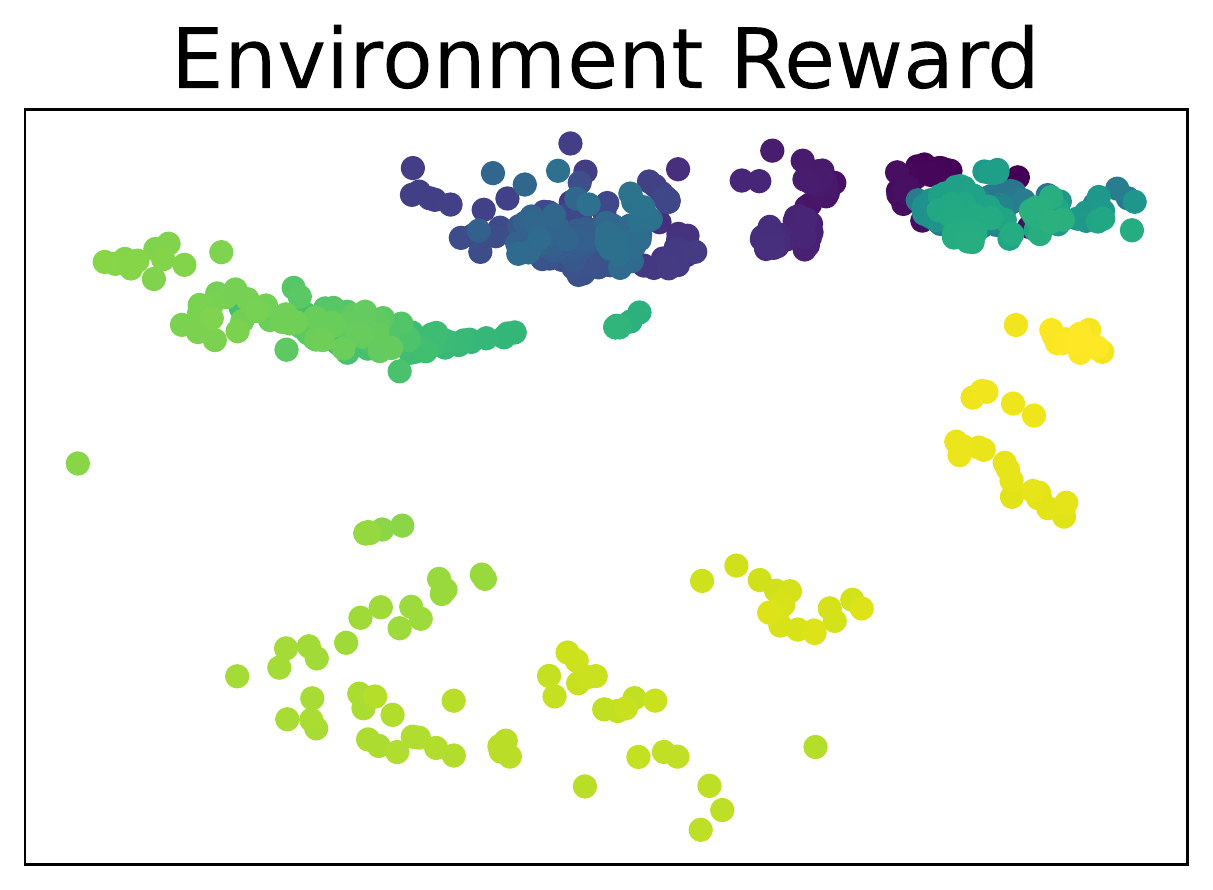} &%
      \includegraphics[width=.4\textwidth]{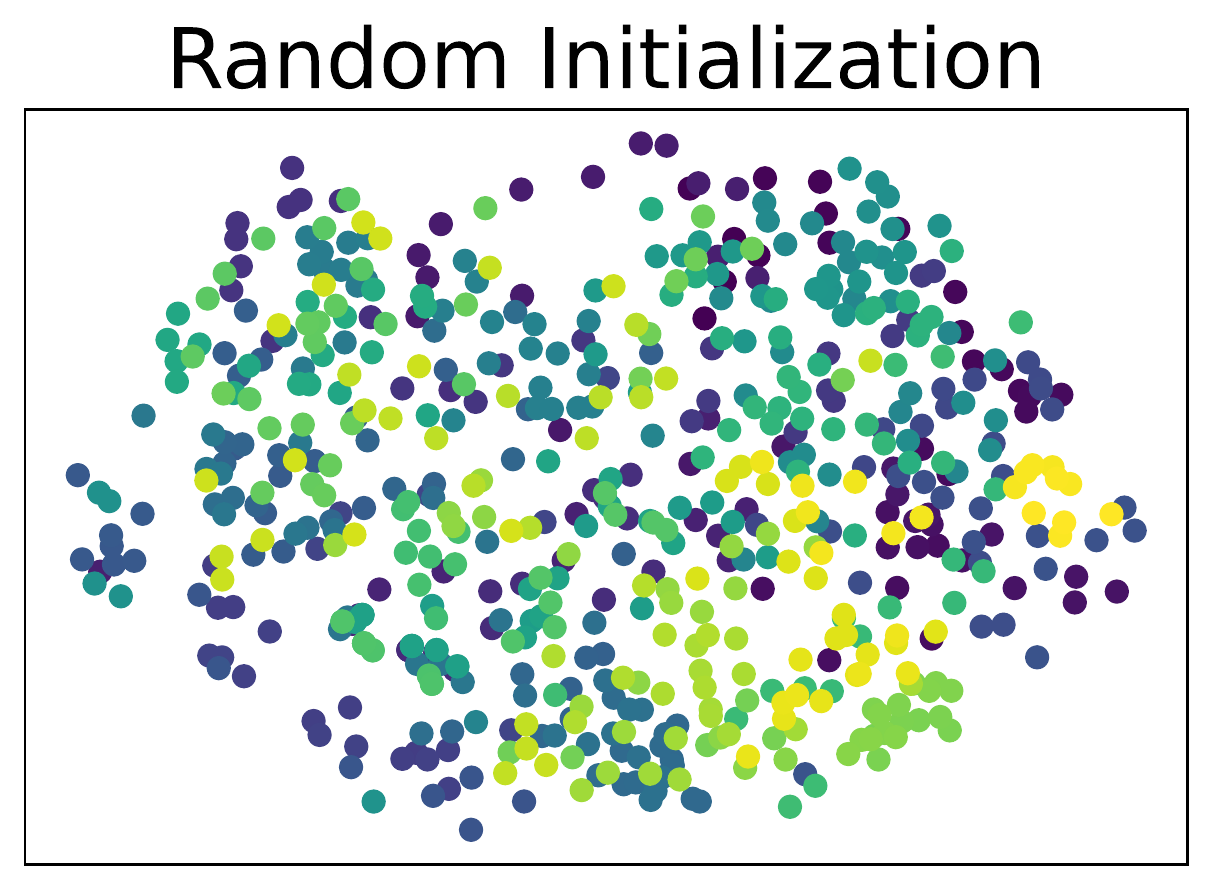}%
  \end{tabular}
  \caption{MDS plots of the features learned by PVN and other baseline methods. Each plotted point corresponds to a state in a single trajectory on the game \textsc{Chopper Command}. The episode starts with the dark purple points and ends with the light yellow points. Environment Reward corresponds to features learned by optimizing the environment reward directly.}
  \label{fig:feature_mds}
\end{minipage}%
\hspace*{\fill}%
\begin{minipage}{.55\textwidth}
  \centering
  \vspace{-10mm}
  \includegraphics[width=\textwidth]{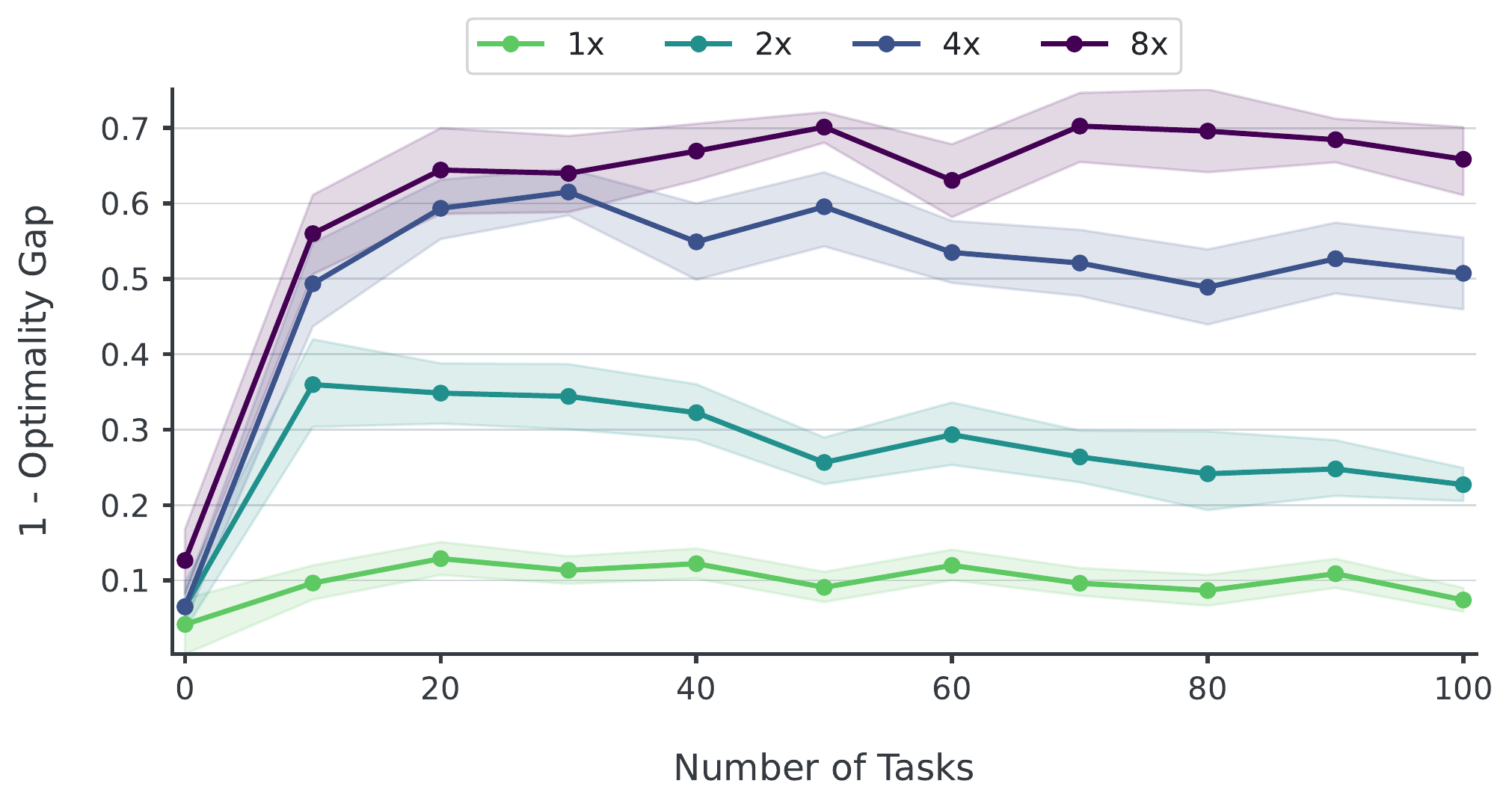}
\caption{
Performance of agents performing linear value approximation on top of a learned PVN. Agents are trained for 4 million environment frames; the legend indicates the network size as a capacity multiplier. Scaling network capacity with the number of auxiliary tasks leads to better performance. Larger networks support a larger number of auxiliary tasks.}
  \label{fig:task-similarity&scaling}
\end{minipage}
\end{figure}

\vspace{-0.2cm}
\subsection{Evaluating the learned representation}
\vspace{-0.1cm}
\label{sec:baselines}

Using the insights gained from our scaling experiment, we evaluate a model with a large number of auxiliary tasks on a broader suite of Atari games.
We use the 8$\times$ network and fix the number of auxiliary tasks to 100, which empirically performed well.
We use the same training setup described in the previous section, though we use all 46 games available in RL Unplugged.
We use 3 seeds for offline pre-training, and 3 additional seeds per encoder during online training.
We train for 3.75M agent steps, and evaluate for 100 episodes.
We compare against the following pre-training baselines:

\textbf{Random Initialization}: Randomly initialized features using the same network architecture.
This simple baseline should confirm that the efficacy of our representations come from our pre-training procedure, and not merely because we use a large encoder network.

\textbf{Random Cumulants (RCs)}: Random reward functions introduced by \citet{dabney21vip}, and later expanded upon by \citet{lyle21auxtask}.
This method is similar to ours, but uses a random reward $r_i(x, x') = s \cdot \left( f(x') - f(x) \right)$ instead of the random indicator function, and replaces the average over next-state actions by a maximization (off-policy learning of the optimal policy for each cumulant). Here, $f$ is also given by a random network.

\textbf{Self-Predictive Representations (SPR)}: A contrastive-learning method that directly optimizes for temporal consistency of the learned representation \citep{schwarzer21spr}. It does so by learning a latent-space transition model and forcing subsequent states to have similar representations.

\textbf{Behavior Cloning (BC)}: Behavior Cloning has been shown as a strong baseline in Offline RL, especially when increasing the amount of pre-training data \citep{schwarzer21sgi, baker22vpt}. It should give a strong indication of the performance that is possible when using large datasets.

For each of these methods, we freeze the $8$x encoder after the pre-training stage and use the previously-described online training scheme.
Figure~\ref{fig:comparison} illustrates that PVN outperforms these baselines in all aggregate metrics.
We also note that PVN using linear function approximation (3.75M agent interactions) is competitive with DQN (50M agent interactions) in many games, as illustrated in the per-game results found in Appendix \ref{appendix:results}.

We visualize the learned representations from different methods using multidimensional scaling (MDS) plots in Figure \ref{fig:feature_mds} (with more games in Appendix \ref{appendix:mds}).
These plots show that different methods clearly lead to representations with different structures.
Notably, the representations learned by PVN (RNI) place temporally-successive states close together, and appears to capture information about the dynamics of the environment without requiring access to the environment reward.

\subsection{Ablations}

We perform ablative experiments to verify the importance of the different PVN components.
First, we validate our choice of indicator function by replacing RNIs with the hash indicator functions described in Section~\ref{sec:methodology}.
We compare their performance in Figure \ref{fig:indicator-ablation}, which shows that hash indicator functions perform poorly compared to RNIs; this indicates that the choice of indicator function is an important design decision.
We expect that the inductive biases in random convolutional networks allow RNIs to include a notion of state similarity in the tasks they induce.

Next, we hypothesize that using the random policy as the target policy is a key contributor to PVN's performance.
To verify this hypothesis, we ablate the TD-target of our learning update to maximize over the next-state action-values, as per the Bellman optimality equation.
When the mean function is replaced with the max function in the TD backup, PVN attempts to learn the optimal value function for each indicator function, rather than the value function of the random policy.
The result of this experiment can be seen in \cref{fig:indicator-ablation}.
Using the mean formulation has a much higher median human normalized score than the max formulation.
This is likely due to instability that arises from max bias and state coverage due to the off-policy learning required for the optimal value function.
Learning the value function of the random policy also requires off-policy learning; however, we predict that it doesn't have such a large effect, as we previously described in Section \ref{practical-implementation}.

\begin{figure}[t!]
\centering
\includegraphics[width=\textwidth]{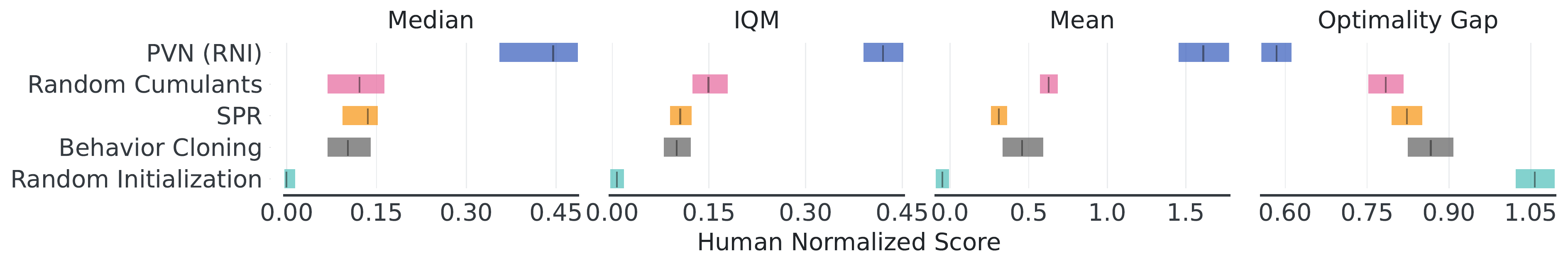}
\caption{Performance of PVN RNI vs other methods described in Section~\ref{sec:baselines}. Computed using 125 seeds, aggregated across 46 Atari games.}
\label{fig:comparison}
\end{figure}

\begin{figure}[t!]
\centering
\includegraphics[width=\textwidth]{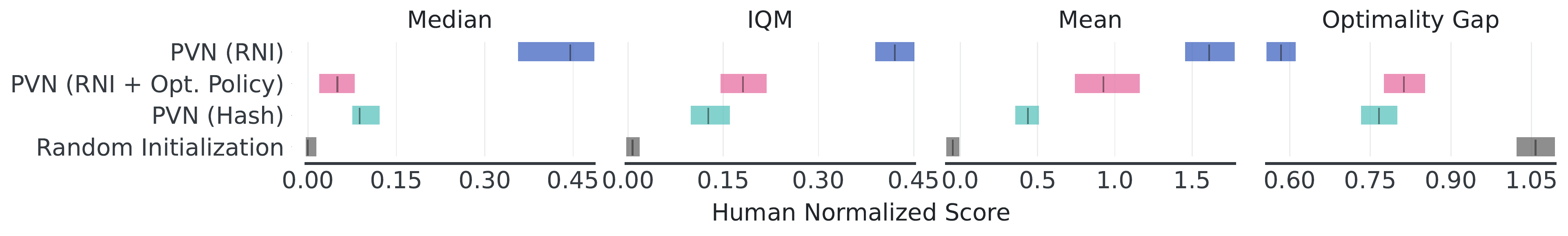}
\caption{\textbf{PVN (Hash)}: PVN with hash indicator functions. \textbf{PVN (RNI)}: PVN with random network indicators. A randomly initialized network with ($8\times$) capacity is plotted for comparison.}
\label{fig:indicator-ablation}
\end{figure}

\section{Discussion}
While our experiments have shed some light on the scalability of auxiliary tasks, there are a number of remaining open questions that represent exciting opportunities for further exploration.
An exciting future direction is to use insights from the literature on scaling models effectively \citep{tan2019efficientnet} to further scale the auxiliary tasks we introduced here.
Orthogonally, it may still be possible to train with more tasks without increasing the capacity of the network.
It is surprising that with even a relatively large network with tens of millions of parameters, such as Impala~($8\times$), the network only supports a handful of tasks.
It is not clear why training with more tasks leads to worse performance, especially for smaller Impala architectures.
Finally, in line with \citet{agarwal2022beyond}, we have open-sourced our pre-trained representations, (see Appendix~\ref{app:implementation})
which we hope will enable researchers to tackle credit assignment on the ALE
without needing to re-learn such representations.

\section*{Acknowledgements}

We thank Nathan U. Rahn, Max Schwarzer, Harley Wiltzer, Wesley Chung, Adrien Ali Taïga, Pierluca D’Oro, David Meger, and Doina Precup for their useful feedback on this work. A special thanks to Wesley Chung for completing the tabular proof presented in Appendix C. This work was supported by the National Sciences and Engineering Research Council of Canada (NSERC) and the Canada CIFAR AI Chair program.

We also acknowledge the Python community whose contributions made this work possible. In particular, this work made extensive use of Jax \citep{jax}, Flax \citep{flax}, Optax \citep{deepmind2020jax}, Numpy \citep{numpy}, Pandas \citep{pandas}, Matplotlib \citep{matplotlib}, and Seaborn \citep{seaborn}.

\bibliography{bib.bib}
\bibliographystyle{iclr2023_conference}

\clearpage
\appendix
\section{Background}

\subsection{Proto-Value Functions}

\begin{figure}[H]
\centering
  \includegraphics[width=0.85\textwidth]{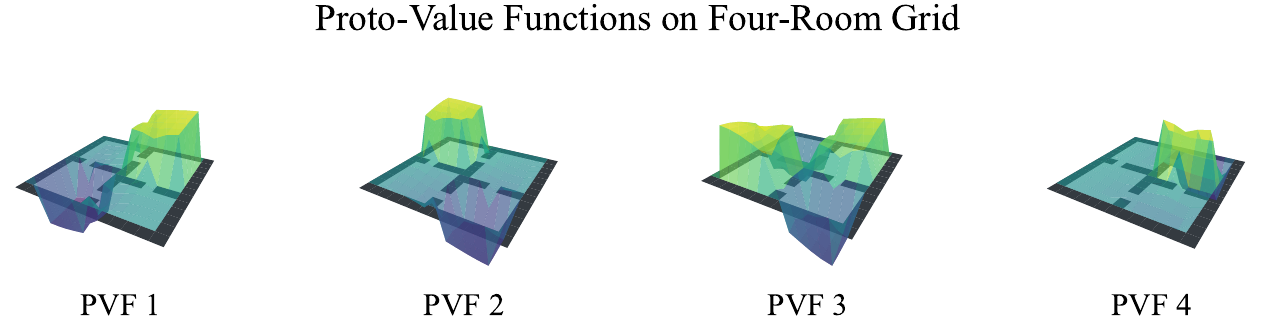}
\caption{First four proto-value functions (eigenvectors of $\Psi^\pi$ when $\pi$ is the uniform random policy) on the Four Room grid world.}
\label{fig:pvfs}
\end{figure}

Proto-Value Functions (PVFs) are defined in terms of the graph Laplacian $L \in \R^{n \times n}$, that is
$$
L = D - A \, ,
$$
where $D \in \R^{n \times n}$ is the degree matrix and $A \in \R^{n \times n}$ is the adjacency matrix.
The actual PVFs are defined as the eigenvectors of the graph Laplacian, that is the non zero vectors $v \in \R^{n} \setminus \{ 0\}$ verifying
$$
L v = \lambda v \, .
$$
where $\lambda \in \R$ is the eigenvalue associated with the eigenvector $v$.  

Individually, these eigenvectors correspond to different time-scales of the diffusion process of a random-walk over the state-space \citep{mahadevan07pvf}. Intuitively, PVFs can be thought of as capturing large-scale temporal properties of the environment. Figure~\ref{fig:pvfs} shows an example of the first four PVFs on the Four-Room domain \citep{sutton99between,solway14optimal} to give some intuition for their structure.

\subsection{The Successor Representation}

Let $P^\pi \in \mathbb{R}^{n \times n}$ be the transition matrix and $r^\pi$ the reward vector, both induced by the policy $\pi$. We can now write the policy evaluation equation for the values $v^\pi \in \mathbb{R}^{n}$ as:
$$
v^\pi = \underbrace{\left( I - \gamma P^\pi \right)^{-1}}_{\Psi^\pi} r^\pi \, ,
$$
where $\Psi^\pi$ is the Successor Representation (SR). We can also write each element of the SR as the expected discounted future occupancy for a state $s'$ given you start in a state $s$:
\begin{align*}
\Psi^\pi(s, s') &= \sum_{t > 0} \gamma^t \mathbb{P}(S_t = s' \, | \, S_0 = s) \\
&= \mathbb{E}_\pi \left[ \gamma^t \mathbf{1} \left\{ S_t = s' \right\}  \, | \, S_0 = s \right] \, .
\end{align*}

\subsection{Connection Between the SR \& PVFs}

We can further connect the Successor Representation with Proto-Value Functions under some assumptions.

\newtheorem{assumption}{Assumption}
\begin{assumption}
The Successor Representation is defined with respect to the uniform random policy.
\end{assumption}
\begin{assumption}
The transition matrix $P^\pi$ is symmetric.
\end{assumption}

Under the above assumptions, we have that the eigenvectors of $\Psi^{\pi}$ are equivalent to the PVFs (eigenvectors of $L$) \citep{machado2017eigenoption, stachenfeld14design}. This helps motivate the choice of the uniform random policy as the target policy in the PVN TD update.

\section{Proofs for \cref{sec:methodology}}
\optimalrepresentation*
\begin{proof}
We consider the SVD of the successor measure $\psi$ with respect to the weighted inner product $\Xi$. In matrix form, we write
\begin{align*}
    \Psi = F \Sigma B^\T
\end{align*}
where $F \in \R^{n \times d}, \Sigma \in \R^{d \times d}$ and $B \in \R^{n \times d}$ satisfy
\begin{align*}
    F^\T F = I, B^\T \Xi B = I, \Sigma = \mathrm{diag}(\sigma_1, ..., \sigma_d)
\end{align*}
and $\sigma_i$ are the singular values of $\Psi$ sorted in decreasing order.
\begin{align*}
    \argmin_{\Phi \in \rR^{n \times d}}     \mathcal{L}_{MCSM}(\Phi) &=   \argmin_{\Phi \in \rR^{n \times d}}   \min_{w_{S,a} \in \rR^d} \expect_{S \sim \xi} \Big [ \big (\sum_{x \in \sspace, a \in \aspace} (\phi(x)^\top w_{S,a} - \psi(x, a, S) \big)^2 \Big ] \\
   &=   \argmin_{\Phi \in \rR^{n \times d}} \min_{W} \| (\Phi W - \Psi) \|_{\Xi}^2\\
   &=   \argmin_{\Phi \in \rR^{n \times d}} \|\Pi_\Phi^\perp \Psi \|_{\Xi}^2\\
\end{align*}
where $\Pi_\Phi$ is the orthogonal projection onto $\Span(\Phi)$. The above is equivalent to saying that $\Phi$ must span the top $d$ singular vectors of $\Psi$.
\end{proof}

\section{Tabular Results}

Define the Successor Representation (SR) as $\Psi^\pi = \left(I - \gamma P^{\pi}\right)^{-1} \in \mathbb{R}^{n \times n}$ and assume that $P^\pi$ is symmetric. Let $\mathcal{G}_k \in \mathbb{R}^{n \times {n \choose k}}$ be the matrix containing all the binary vectors corresponding to all $n \choose k$ subsets (i.e., its columns have all possible k-hot binary vectors). For example, given $n = 4$ we have,
$$
\mathcal{G}_2 = \begin{bmatrix}
1 & 1 & 1 & 0 & 0 & 0 \\
1 & 0 & 0 & 1 & 1 & 0 \\
0 & 1 & 0 & 1 & 0 & 1 \\
0 & 0 & 1 & 0 & 1 & 1
\end{bmatrix} \, .
$$
In the tabular setting we seek to learn the successor measure with respect to $\mathcal{G}_k$ by minimizing
$$
\mathcal{L}(\Phi, W) = \norm{\Phi W - \Psi \mathcal{G}_k}_F^2 \, .
$$
We know that the optimal $\Phi$ will span the principal components of $\Psi \mathcal{G}_k$ \citep{bellemare2019avf}. Note that when $k = 1$ we have, $\Psi \mathcal{G}_1 = \Psi$ in which case the principal components are the PVFs \cite{machado2017eigenoption}. We want to characterize the principal subspace of $\Psi\mathcal{G}_k$ for $1 < k < n$.

\textbf{Claim}: $C = \left( \Psi \mathcal{G}_k \right) \left( \Psi \mathcal{G}_k \right)^\top$ has the same eigenvectors for all $k \in \{ 1, \dots, n - 1 \}$.

\textbf{Proof}: We start by writing down the covariance matrix as
\begin{align*}
C &= \left( \Psi \mathcal{G}_k \right) \left( \Psi \mathcal{G}_k \right)^\top \\
&= \Psi \mathcal{G}_k \mathcal{G}_k^\top \Psi^\top \, .
\end{align*}
The matrix $\mathcal{G}_k \mathcal{G}_k^\top$ is a double-constant matrix \citep{oneill21doubleconstant}, i.e., it has a constant $a$ on the diagonal and a constant different from $a$ on the off-diagonal:
$$
M_k = \mathcal{G}_k \mathcal{G}_k^\top = \begin{bmatrix}
a & t & t & \cdots & t \\
t & a & t & \cdots & t \\
t & t & a & \cdots & t \\
\vdots & \vdots & \vdots & \ddots & \vdots \\
t & t & t & \cdots & a
\end{bmatrix} \, .
$$
In our case we have $a = {n - 1 \choose k - 1}$ and $t = {n - 2 \choose k - 2}$.
Furthermore, we can use another property of double-constant matrices, we have that the eigenvalues of $M_k$ are $\lambda_1 = \lambda_{\ast\ast} = a - t + n \cdot t$ and $\lambda_i = \lambda_{\ast} = a - t$ for all $i = 2, \dots, n$. The eigenvectors for $\lambda_{\ast\ast}$ are $v_{\ast\ast} \propto \mathbf{1}$ where $\mathbf{1}$ is the vector of all ones. The eigenvectors for $\lambda_\ast$ are any non-zero vectors $v_\ast$ where $v_\ast \cdot \mathbf{1} = 0$, i.e., $v_\ast$ is orthogonal to the vector of all ones.

Next, we characterize the eigenspace of the matrix $\Psi^\pi$. We have,
\begin{align*}
\Psi &= \left( I - \gamma P^{\pi} \right)^{-1} \\
&= \left( I - \gamma Q \Lambda Q^{-1} \right)^{-1} \tag*{(Since $P^\pi$ is symmetric, hence diagonalizable)} \\
&= Q \left( I - \gamma \Lambda \right)^{-1} Q^{-1} \, .
\end{align*}
This means that the eigenvectors of $\Psi^\pi$ are the same as the eigenvectors of $P^\pi$. We will denote the eigenvalues of $P^\pi$ to be $\lambda_i$ with associated eigenvectors $x_i$. For simplicity, we denote the eigenvalues of $\Psi^\pi$ as $\mu_i$ for $i = 1, \dots, n$. Note that $\mu_i = (1 - \gamma \lambda_i)^{-1}$ for $i = 1, \dots, n$.

Furthermore, since $P^\pi$ is a stochastic matrix, we have that $\mathbf{1}$ is an eigenvector with eigenvalue $1$. We let $x_1 = \mathbf{1}$ without loss of generality. Also, since $P^\pi$ is assumed to be symmetric, the eigenvectors can be chosen to be orthogonal to each other.

Putting this all together, take $x_i$ to be the $i$-th eigenvector of $\Psi^\pi$ (and $P^\pi$). We now have,
\begin{align*}
C x_i &= \Psi M_k \Psi^\top x_i \\
&= \Psi M_k \Psi x_i \tag*{(by symmetry)} \\
&= \Psi M_k \mu_i x_i \, . \tag*{($x_i$ is an eigevector of $\Psi$)}
\end{align*}
Now there are two cases:

\textit{Case 1}: If $x_i = \mathbf{1}$ (and $i = 1$) we have,
\begin{align*}
C x_i &= \Psi M_k \mu_i x_i \\
&= \Psi \lambda_{\ast\ast} \mu_1 \mathbf{1} \\
&= \mu_1 \lambda_{\ast\ast} \mu_1 \mathbf{1} \\
&= \mu_1^2 \lambda_{\ast\ast} \mathbf{1}
\end{align*}

\textit{Case 2}: If $x_i \ne \mathbf{1}$ (and $i > 1$) we know that $x_i$ is orthogonal to $\mathbf{1}$ (since $P^\pi$ is a symmetric matrix) thus lies in the second eigenspace of $M_k$ corresponding to the eigenvector $v_\ast$. Therefore, we have,
\begin{align*}
C x_i &= \Psi M_k \mu_i x_i \\
&= \Psi \lambda_\ast \mu_i x_i \\
&= \mu_i \lambda_\ast \mu_i x_i \\
&= \mu_i^2 \lambda_\ast x_i \, .
\end{align*}

Thus, we have shown that $C = \Psi M_k \Psi^\top$ has the same eigenvectors as $\Psi$ and are independent of $k$. The new eigenvalues are $\mu_1^2 \lambda_{\ast\ast}$ for the eigenvector $\mathbf{1}$ and $\mu_i^2 \lambda_\ast$ for all other eigenvectors $x_i$ for $i = 2, \dots, n$.

\section{Implementation Details} \label{app:implementation}
We have released a reference implementation along with notebooks demonstrating how to download and use our pre-trained representations at:
\href{https://github.com/google-research/google-research/tree/master/pvn}{https://github.com/google-research/google-research/tree/master/pvn}. %

\subsection{Universal Hash Functions}
\label{app:hash}

We define the set of multiply-shift universal hash functions \citep{carter79universal} as:
$$
h_i(x) = \left( a^{(i)}_0 + \sum_{j=1}^{n} a^{(i)}_j \cdot x_j \,\, \mathrm{mod} \,\, p \right) \,\, \mathrm{mod} \,\, m \, ,
$$
where $x \in \R^n$ is a flattened vector of the environment's observation, $a^{(i)} \in \R^n$ is a randomly initialized vector that that parameterizes the hash function, $p$ is a prime, which in our case is the Mersenne prime $p = 2^{13} - 1$, and $m$ allows us to control the activation proportion of the indicator function. We can now define the indicator function as follows:
$$
f_i(x) = \indic{h_i(x) = 0} \, .
$$

\subsection{Quantile Regression} \label{appendix:qr}

We use quantile regression to tune the proportion of states that trigger our random network indicator functions. To do so, we use a tunable bias that we update with gradient descent. First, recall that the random network indicators are computed using a random neural network $f: x \mapsto \mathbb{R}$. If we naively apply the $\textsc{sign}$ function to the network output, the proportion of states that map to $1$ is unlikely to match the target proportion $p$. Therefore, we first add a bias term to the output $r' = f(x) + b$, and then tune the bias to minimize the quantile regression loss

\begin{equation}
    \mathcal{L}_{\text{QR}}(b) = \E_{x \in \mathcal{X}} r'(x) \cdot ((1 - p) - \textsc{Sign}(r'(x))) 
\end{equation}

Once the bias has been tuned, the output of the random network indicator is $r = \textsc{sign}(f(x) + b)$.

\subsection{Algorithm}
\label{app:algo}

Algorithm \ref{algo1} gives pseudo-code for the method as implemented with a fixed replay memory.

\begin{algorithm}[H]
\caption{Proto-Value Networks}
\label{algo1}
\begin{algorithmic}[1]
\Require Transition dataset $\mathcal{D}$, Function approximator $\hat{\Psi}_{\theta} : \mathcal{X} \to \mathbb{R}^{m \times |\mathcal{A}|}$, $m$ RNI networks $f_{i} : \mathcal{X} \to \mathbb{R}$, $m$ RNI threshold bias vectors $b_i$, Polyak coefficient $\tau$, reward proportion $p$

\For{step $= 1, \dots$}
\State Sample mini-batch of $n$ transitions $\left\{\left(x, a, x'\right)\right\}_{i=1}^n \subset \mathcal{D}$
\State
\State \textcolor{blue}{\textit{\# Calculate random network indicators}}
\State $r'_{j}(x) \gets f_{j}(x) + b_j \quad \forall j = 1,\dots,m$
\State $r_{j}(x) \gets \textsc{Sign}(r'_{j}(x))$
\State
\State \textcolor{blue}{\textit{\# Calculate PVN loss}}
\State $\mathcal{L}_{\text{PVN}}(\theta) \gets \frac{1}{n}\sum_{i=1}^n \frac{1}{m} \sum_{j=1}^m \left( r_j(x_i) + \gamma \frac{1}{|\mathcal{A}|} \sum_{a' \in \mathcal{A}} \hat{\Psi}^{(j)}_{\theta^{-}}(x'_i, a') - \hat{\Psi}^{(j)}_{\theta}(x_i, a_i) \right)^2$
\State
\State \textcolor{blue}{\textit{\# Calculate quantile regression loss}}
\State $\mathcal{L}_{\text{QR}}(b_j) \gets \frac{1}{n} \sum_{i=1}^n r'_j(x) \cdot ((1 - p) - \textsc{Sign}(r'_j(x))) $
\State
\State \textcolor{blue}{\textit{\# Perform gradient step}}
\State Update $\theta \gets \theta - \eta_1 \frac{\partial}{\partial \theta} \mathcal{L}_{\text{PVN}}(\theta)$
\State Update $b_j \gets b_j- \eta_2 \frac{d}{d b_j}\mathcal{L}_{\text{QR}}(b_j) \quad \forall \, j = 1, \dots, m$
\State
\State \textcolor{blue}{\textit{\# Polyak average target network parameters}}
\State $\theta^{-} \gets \tau \theta^{-} + \left(1 - \tau \right) \theta$
\EndFor
\end{algorithmic}
\end{algorithm}

\subsection{Self-Predictive Representations (SPR)}

We implement an 8x version of SPR \citep{schwarzer21spr} using the same parameters as in, \citet{schwarzer21spr} except we take the final fixed representation to be the projection layer in
addition to the convolutional encoder. This was done to maintain the number of features for all our pre-trained methods. We also train SPR for much longer than in the original paper, specifically, we perform the same number of gradient steps as PVN.

\clearpage

\subsection{Hyperparemeters} \label{appendix:hyperparameters}

In the tables below we report all relevant hyperparameter choices for both our offline pre-training phase, and online learning phase.

We selected most of our hyperparameters based on best practices from previous work.
We chose $p$ based on the estimated reward proportion from actual Atari games.
We tuned our online hyperparameters using 5 tuning games, $\textsc{Asterix}$, $\textsc{Beam Rider}$, $\textsc{Pong}$, $\textsc{Qbert}$, and $\textsc{Space Invaders}$.

\begin{table}[H]
    \begin{minipage}[t]{.475\textwidth}
    \centering
    \caption{PVN Hyperparameters}
    \centering
    \begin{tabular}{ll}
    \toprule
      Hyperparameter &      Value \\
      \midrule
      Number of auxiliary tasks &                 100 \\
                     Batch size &                 256 \\
       Number of gradient steps &           1,562,500 \\
       Discount factor $\gamma$ &                0.99 \\
Target EMA coefficient $\tau$ &                0.99 \\
          Reward proportion $p$ &                0.01 \\
Quantile regression burn-in steps &              62,500 \\
               Tasks per module &                  10 \\
                      Optimizer &                Adam \\
                Adam Learning rate &                1e-4 \\
                Adam  $\beta_1$ &                 0.9 \\
                Adam  $\beta_2$ &               0.999 \\
                Adam $\epsilon$ &              1.5e-4 \\
    \bottomrule
    \end{tabular}
    \end{minipage}%
    \hfill
    \begin{minipage}[t]{.475\textwidth}
    \centering
    \caption{Online Hyperparameters}
    \begin{tabular}{ll}
    \toprule
     Hyperparameter &       Value \\
     \midrule
     Update rule &         DQN \\
     Number of layers &  1 (linear) \\
     Number of agent steps &  3.75M \\
     Frame skip & 4 \\
     Total frames & 15M \\
     Train $\epsilon$ &        0.01 \\
     Evaluation $\epsilon$ &       0.001 \\
     n step &           1 \\
     Discount factor $\gamma$ &        0.99 \\
     Optimizer &        Adam \\
     Adam Learning rate &     6.25e-5 \\
     Adam $\epsilon$ &      1.5e-4 \\
     Maximum gradient norm &          10 \\
     Batch size &          32 \\
     Minimum replay size &       2,000 \\
     Maximum replay size &   1,000,000 \\
     Gradient updates per agent step &           1 \\
    \bottomrule
    \end{tabular}
    \end{minipage}
\end{table}

\clearpage
\section{MDS Plots} \label{appendix:mds}

Below are a selection of MDS plots for the methods discussed in the paper for each of the 5 tuning games. These plots are generated using the representations learned during the pre-training phase, and one expert trajectory is presented in each plot. Darker points correspond to states at the beginning of the trajectory, and lighter points correspond to states at the end of the trajectory. These plots demonstrate that the representations learned by each method are clearly different, and therefore have different properties.
With these MDS explorations, we hope to gain some insight into the properties of each learnt representation.
Motivated by PVFs, we expect a good (general) representation to capture the structure of the underlying transition dynamics of the environment.
We note that PVN captures the temporal structure of each episode relatively well.
With PVN, states that are near together in time have similar features, aligning with the properties of PVFs in the tabular case.

\begin{figure}[H]
\centering
\begin{subfigure}[t]{\textwidth}
\centering
\includegraphics[width=0.85\textwidth]{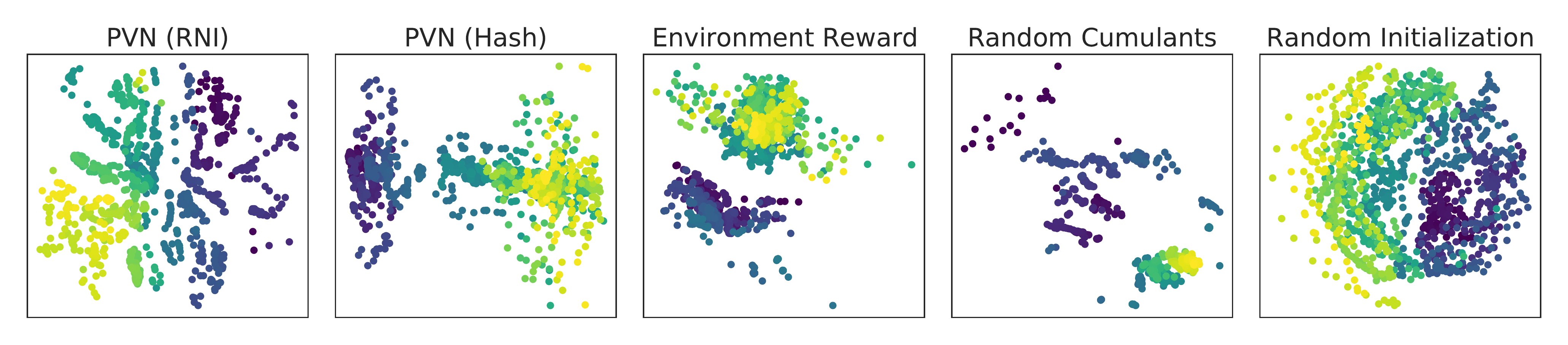}
\caption{Asterix}
\end{subfigure}
\begin{subfigure}[t]{\textwidth}
\centering
\includegraphics[width=0.85\textwidth]{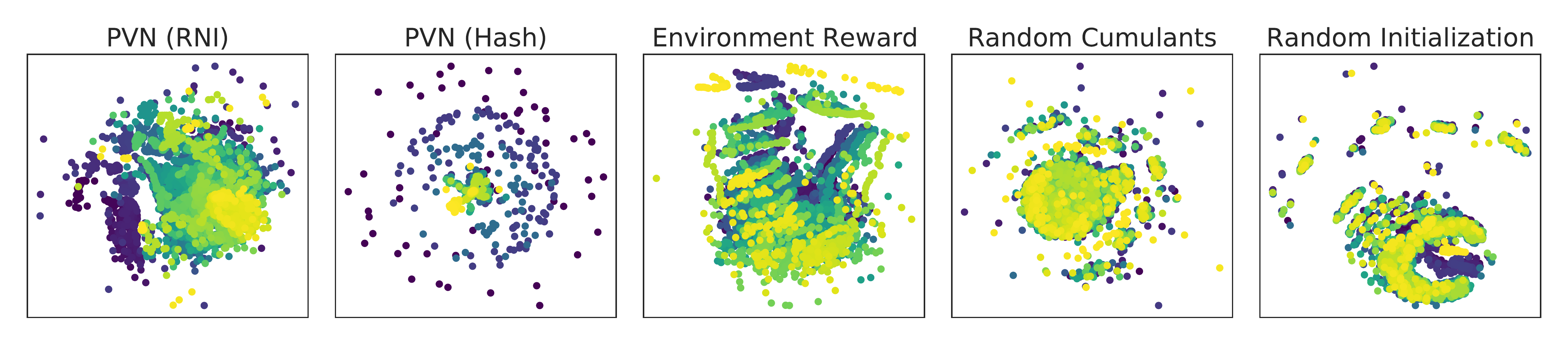}
\caption{BeamRider}
\end{subfigure}
\begin{subfigure}[t]{\textwidth}
\centering
\includegraphics[width=0.85\textwidth]{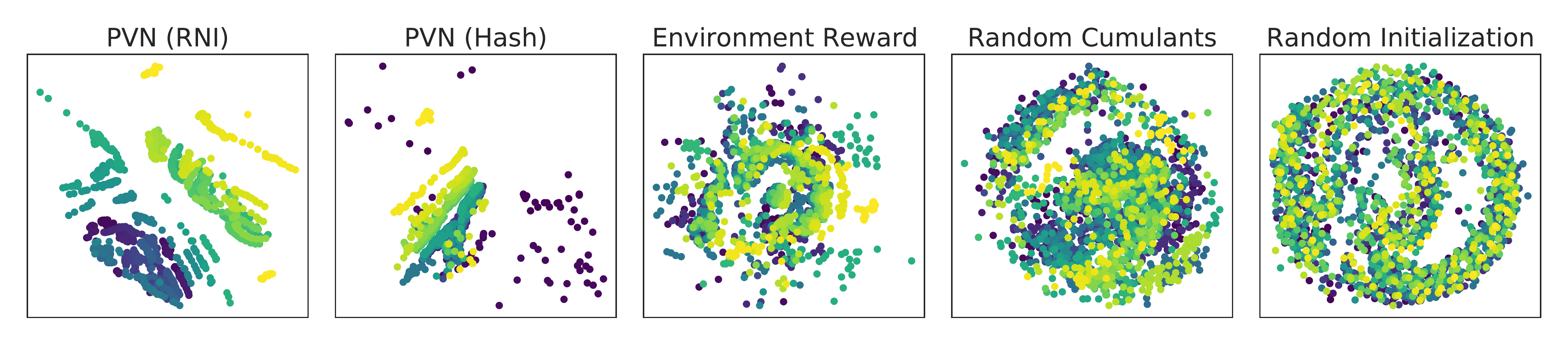}
\caption{Pong}
\end{subfigure}
\begin{subfigure}[t]{\textwidth}
\centering
\includegraphics[width=0.85\textwidth]{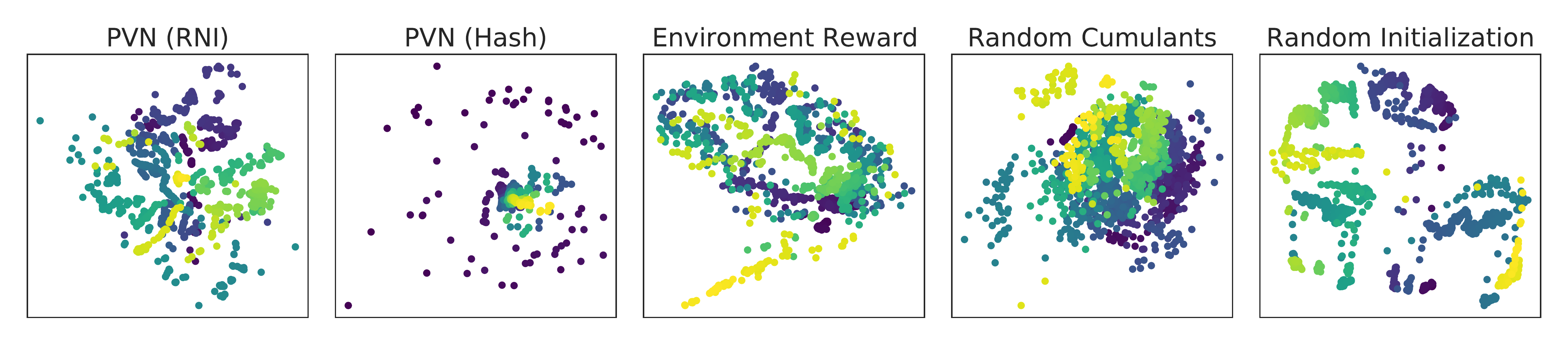}
\caption{Qbert}
\end{subfigure}
\begin{subfigure}[t]{\textwidth}
\centering
\includegraphics[width=0.85\textwidth]{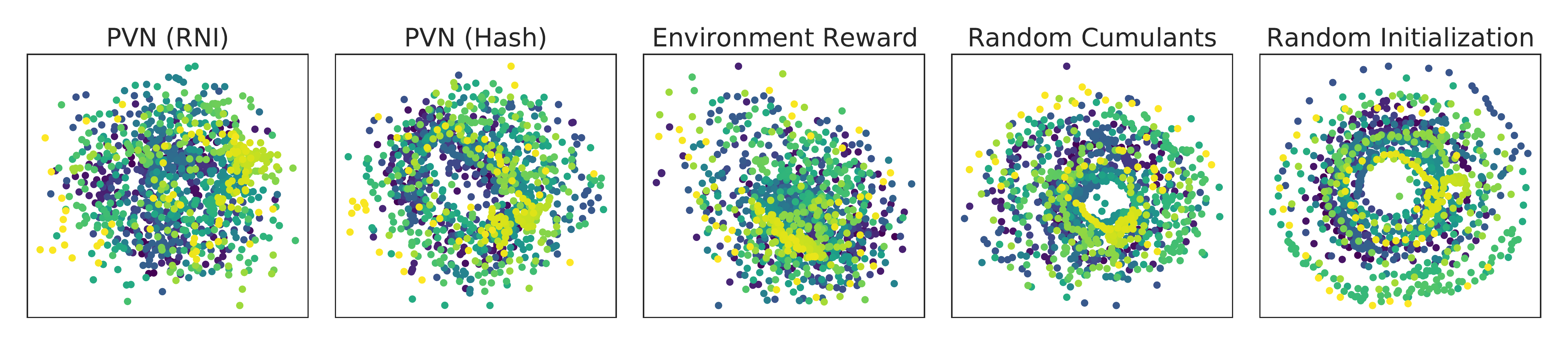}
\caption{SpaceInvaders}
\end{subfigure}
\caption{MDS plot for a single trajectory.}
\end{figure}

\section{Per-Game Results} \label{appendix:results}

Below, we report the per-game results for the methods discussed in the paper. In addition, we include DQN and the Environment Reward method, which trains an encoder using the environment reward during the pre-training phase, and then uses the fixed representation to train a linear head in the same manner as the compared methods. Note that Environment Reward acts as a kind of oracle in our setting -- it is the only method that has access to the environment reward during the pre-training phase. The results reported here use 1 offline seed and 3 online seeds, and evaluation scores (averaged over 100 evaluation runs) are reported after 3.75M agent steps. DQN's performance is reported after 50M agent steps.

\begin{table}[H]
\caption{Per-Game Results on RLU Games}
\label{table:results}
\resizebox{\textwidth}{!}{

\begin{tabular}{l|rr:rrrrr}
\toprule
& \multicolumn{7}{c}{Method} \\
\cmidrule(lr){2-8}
Game &       DQN & Environment Reward & Random Initialization &  Behavior Cloning &               SPR &   Random Cumulants &         PVN (RNI) \\
\midrule
Alien          &   1,932.2 &            2,681.1 &                 403.7 &             414.8 &             962.8 &              104.7 &  \textbf{1,042.8} \\
Amidar         &     893.7 &              169.1 &                  43.8 &              50.2 &              46.8 &               43.8 &     \textbf{63.6} \\
Assault        &   1,442.5 &              695.1 &                 193.2 &             452.9 &             293.0 &              494.6 &  \textbf{1,100.7} \\
Asterix        &   2,953.4 &           13,096.3 &                 522.9 &           7,139.8 &           1,625.1 &              347.8 & \textbf{15,401.2} \\
Atlantis       & 645,494.5 &          592,372.5 &              14,269.8 &          12,682.1 &          10,275.0 &  \textbf{34,212.2} &          12,760.3 \\
BankHeist      &     589.8 &               47.6 &                   7.0 &              54.7 &             618.8 &   \textbf{1,036.4} &             456.1 \\
BattleZone     &  14,818.1 &           37,796.3 &               1,547.4 &          12,545.5 &           2,360.1 &            6,492.2 & \textbf{13,528.7} \\
BeamRider      &   4,569.7 &           10,660.3 &                 411.8 &           3,862.2 &           1,111.5 &            5,447.1 &  \textbf{8,646.0} \\
Boxing         &      74.7 &               99.5 &                 -30.2 &              -4.7 &              53.0 &               -2.8 &     \textbf{87.2} \\
Breakout       &      96.4 &              296.8 &                   6.5 &               3.8 &               7.1 &     \textbf{334.0} &              16.4 \\
Carnival       &   4,438.2 &            4,976.0 &                 362.2 &             671.0 &             684.8 &              648.9 &  \textbf{1,228.5} \\
Centipede      &   2,358.9 &            1,176.9 &               3,534.5 &           1,688.6 &  \textbf{4,300.3} &            1,082.6 &           3,420.6 \\
ChopperCommand &   1,958.7 &            3,685.8 &                 494.2 &             643.2 &             880.6 &            1,078.0 &  \textbf{2,018.5} \\
CrazyClimber   & 100,158.8 &          137,240.0 &              10,179.4 &           2,914.7 &          51,999.3 & \textbf{133,041.2} &          19,259.1 \\
DemonAttack    &   4,427.9 &          101,850.7 &                 123.1 &          14,426.7 &             421.8 &            2,407.8 & \textbf{78,671.1} \\
DoubleDunk     &     -14.2 &              -18.5 &                 -19.6 &             -18.4 &    \textbf{-16.9} &              -17.0 &             -20.4 \\
Enduro         &     668.4 &            1,593.9 &                  22.7 &             126.8 &    \textbf{509.4} &               14.7 &             426.4 \\
FishingDerby   &      -1.1 &               28.7 &                 -96.6 &             -92.6 &             -89.2 &              -95.6 &    \textbf{-72.5} \\
Freeway        &      24.1 &               33.7 &                   8.5 &              14.7 &              15.9 &      \textbf{29.5} &              18.9 \\
Frostbite      &     623.0 &            4,386.1 &                  39.2 &              96.0 &    \textbf{834.0} &               56.2 &             408.3 \\
Gopher         &   4,579.2 &            1,114.5 &                 231.1 &             479.8 &             939.6 &            1,459.0 &  \textbf{3,070.0} \\
Gravitar       &     214.4 &            1,481.9 &                  42.7 &    \textbf{310.0} &              72.8 &               77.6 &             164.3 \\
Hero           &  12,348.1 &            9,932.4 &                 544.5 &           1,974.9 &  \textbf{9,256.3} &              445.5 &           1,942.2 \\
IceHockey      &      -7.3 &               21.8 &                 -13.2 &              -9.1 &             -14.9 &              -12.2 &     \textbf{-8.9} \\
Jamesbond      &     473.3 &            1,058.6 &                  24.4 &             387.9 &              82.2 &               69.7 &    \textbf{630.2} \\
Kangaroo       &   8,653.0 &            9,612.0 &                 199.1 &  \textbf{1,818.0} &              78.7 &            1,033.1 &           1,724.4 \\
Krull          &   5,892.5 &            8,630.4 &               1,248.9 &           3,622.1 &           3,836.1 &              206.4 &  \textbf{4,006.5} \\
KungFuMaster   &  20,245.1 &           35,076.0 &               2,806.7 &           9,463.0 & \textbf{16,877.5} &           12,262.9 &          13,628.4 \\
MsPacman       &   2,880.5 &            5,797.4 &                 836.4 &             629.4 &  \textbf{1,946.6} &            1,444.4 &           1,373.9 \\
NameThisGame   &   6,114.5 &           17,612.3 &               1,425.9 &           2,519.8 &           2,036.0 &            2,418.6 &  \textbf{7,218.7} \\
Phoenix        &   4,764.3 &           24,501.3 &                 414.6 &             904.1 &             935.8 &   \textbf{7,828.6} &           6,946.1 \\
Pong           &      11.5 &               21.0 &                 -20.8 &             -14.3 &             -14.0 &                6.8 &     \textbf{20.1} \\
Pooyan         &   3,114.5 &            1,675.0 &                 747.7 &             444.9 &             829.5 &   \textbf{1,443.3} &           1,432.2 \\
Qbert          &   8,309.5 &           14,393.7 &                 172.1 &             664.6 &             567.9 &            2,611.3 &  \textbf{3,503.1} \\
Riverraid      &   9,703.1 &           11,319.0 &               1,279.7 &           2,829.2 &           3,053.1 &              619.9 &  \textbf{6,841.6} \\
RoadRunner     &  36,386.9 &           37,417.8 &               2,543.2 &             939.2 &           3,932.6 &              418.0 &  \textbf{6,975.3} \\
Robotank       &      39.2 &               75.1 &                   2.4 &     \textbf{44.7} &               3.9 &               14.1 &               3.5 \\
Seaquest       &   1,510.1 &              173.7 &                  63.8 &             294.2 &             310.3 &               51.6 &  \textbf{1,040.8} \\
SpaceInvaders  &   1,466.0 &           28,807.5 &                 131.3 &             327.5 &             264.8 &     \textbf{988.4} &             870.5 \\
StarGunner     &  18,799.0 &              728.0 &                 642.6 & \textbf{12,927.1} &             743.2 &              677.3 &           6,189.2 \\
TimePilot      &   2,613.8 &           11,205.9 &               2,409.2 &           1,860.5 &           1,842.7 &            1,408.7 &  \textbf{2,418.6} \\
UpNDown        &   8,889.8 &           17,629.3 &               1,174.4 &           2,307.4 &             541.2 &  \textbf{11,018.7} &           9,193.7 \\
VideoPinball   &  87,468.5 &          269,381.0 &               5,213.2 & \textbf{12,181.1} &           6,480.1 &            4,810.8 &           7,638.2 \\
WizardOfWor    &   1,904.9 &            4,749.5 &                 480.3 &           1,298.0 &             882.6 &              794.7 &  \textbf{1,739.4} \\
YarsRevenge    &  20,520.1 &           47,218.7 &               3,089.1 &          10,056.0 &          10,047.2 &              768.4 & \textbf{21,253.5} \\
Zaxxon         &   2,985.8 &           14,329.2 &                 313.8 &           3,013.2 &             283.8 &            1,170.9 &  \textbf{4,215.0} \\
\bottomrule
\end{tabular}

}
\end{table}

\clearpage

\section{Training Curves}
\label{appendix:additional-plots}

Figure~\ref{fig:training-curves} presents training curves for all 46 games in RL Unplugged. Each point corresponds to the average episodic return during training binned over 1M frames. The shaded region corresponds to the 95\% bootstrapped confidence interval for the mean over three runs. Note: These results will differ from Table~\ref{table:results} as we use a separate evaluation phase with a lower value of $\epsilon$ as is standard in the ALE.

\begin{figure}[H]
\centering
\includegraphics[width=\textwidth]{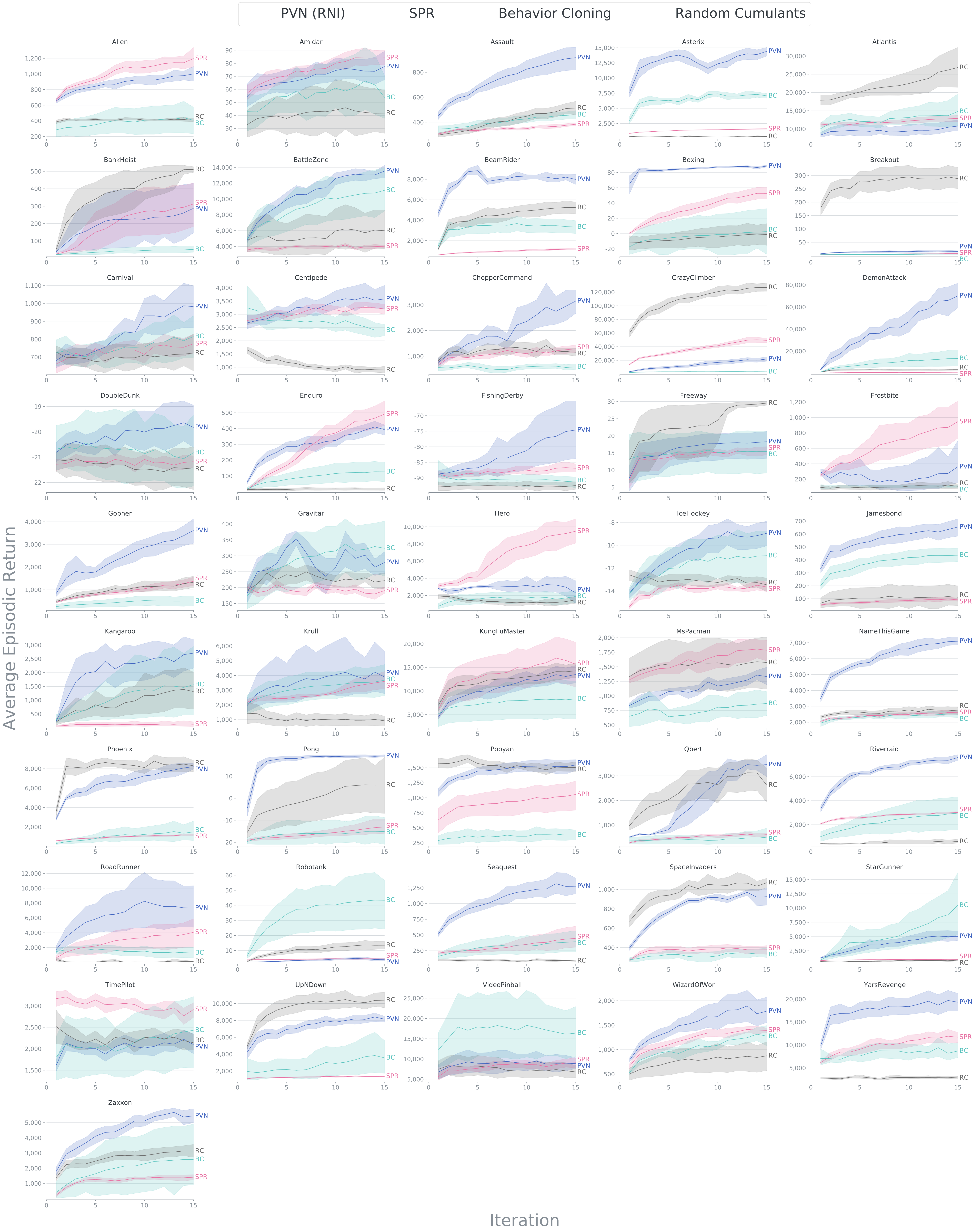}
\caption{Training curves for: PVN (RNI), SPR, and Random Cumulants on all 46 games in RL Unplugged. The shaded region corresponds to the 95\% bootstrapped confidence interval for the mean over three runs. The dashed horizontal line corresponds to the average evaluation score for Behavioral Cloning over three runs.}
\label{fig:training-curves}
\end{figure}

\end{document}